\documentclass{article}
\usepackage{graphicx} %
\usepackage{amsmath,amsthm,amssymb}
\usepackage{mathtools,bbm,xspace}
\usepackage[left=1in,bottom=1in,top=1in,right=1in]{geometry}
\usepackage{bm}
\usepackage{hyperref}
\usepackage{cleveref}
\usepackage{color-edits}
\usepackage{float}
\usepackage{tikz}
\usepackage{algorithm}
\usepackage{algorithmicx}
\usepackage{algpseudocode}
\usepackage[T1]{fontenc}
\usepackage[utf8]{inputenc}
\usepackage[english]{babel}
\usepackage{enumitem}
\usepackage{natbib}
\usetikzlibrary{decorations.pathreplacing, positioning}
\usetikzlibrary{calc}
\addauthor[Mikael]{m}{purple}
\addauthor[Chirag]{c}{blue}

\usepackage[textsize=tiny]{todonotes}

\usepackage{thmtools}
\usepackage{thm-restate}

\newcommand{\eps}{\varepsilon}
\newcommand{\ind}{\mathbbm{1}}

\DeclareMathOperator*{\e}{\mathbb{E}}
\DeclareMathOperator*{\p}{\mathbb{P}}

\DeclareMathOperator{\supp}{\mathrm{supp}}

\hypersetup{
    colorlinks=true,
    linkcolor=blue,
    citecolor=blue,
    urlcolor=magenta,
    pdftitle={Document Title},
    pdfauthor={Author Name},
    pdfsubject={Subject},
    pdfkeywords={keyword1, keyword2, keyword3}
}

\theoremstyle{plain}
\newtheorem{theorem}{Theorem}[section]
\newtheorem{lemma}[theorem]{Lemma}
\newtheorem{corollary}[theorem]{Corollary}

\theoremstyle{definition}

\theoremstyle{remark}

\makeatletter
\newcommand{\alphacmd@factory}[1]{}
\newcounter{alphacmdcounter}
\newcommand{\GenerateAlphabetCmds}[2]{%
    \renewcommand{\alphacmd@factory}[1]{%
        \expandafter\providecommand\csname #1##1\endcsname{{#2{##1}}}%
    }
    \setcounter{alphacmdcounter}{0}
    \loop
        \stepcounter{alphacmdcounter}
        \edef\alphacmd@ID{\@Alph\c@alphacmdcounter}
        \expandafter\alphacmd@factory\alphacmd@ID
    \ifnum\thealphacmdcounter<26
    \repeat
}
\newcommand{\GenerateAlphabetCmdsLower}[2]{%
    \renewcommand{\alphacmd@factory}[1]{%
        \expandafter\providecommand\csname #1##1\endcsname{{#2{##1}}}%
    }
    \setcounter{alphacmdcounter}{0}
    \loop
        \stepcounter{alphacmdcounter}
        \edef\alphacmd@ID{\@alph\c@alphacmdcounter}
        \expandafter\alphacmd@factory\alphacmd@ID
    \ifnum\thealphacmdcounter<26
    \repeat
}
\makeatother

\GenerateAlphabetCmds{c}{\mathcal}
\GenerateAlphabetCmdsLower{c}{\mathcal}

\GenerateAlphabetCmds{b}{\mathbb}
\GenerateAlphabetCmdsLower{b}{\mathbb}

\GenerateAlphabetCmds{r}{\mathbf}
\GenerateAlphabetCmdsLower{r}{\mathbf}

\def\gap{ {\mathrm{gap}} }
\def\iderr{{\mathrm{IdErr}}}
\def\generr{{\mathrm{GenErr}}}

\title{Agnostic Language Identification and Generation}

\author{
Mikael M{\o}ller H{\o}gsgaard\thanks{Aarhus University and University of Oxford. Email: \texttt{hogsgaard@cs.au.dk.}}
\and
Chirag Pabbaraju\thanks{Stanford University. Email: \texttt{cpabbara@cs.stanford.edu.}}
}

\date{\today}

\begin{document}

\maketitle

\begin{abstract}
    Recent works on language identification and generation have established tight statistical rates at which these tasks can be achieved. These works typically operate under a strong realizability assumption: that the input data is drawn from an unknown distribution necessarily supported on some language in a given collection. In this work, we relax this assumption of realizability entirely, and impose no restrictions on the distribution of the input data. We propose objectives to study both language identification and generation in this more general ``agnostic'' setup. Across both problems, we obtain novel interesting characterizations and nearly tight rates.
\end{abstract}

\section{Introduction}
\label{sec:introduction}

Learning a language from a finite amount of data, and in particular from finitely many positive examples from the language, is a fundamental problem, both for humans as well as computers. Two natural language learning problems are those of \textit{language identification} and \textit{language generation}. In language identification, the objective is to output an exact representation of the language that is presumed to be generating the examples being seen, whereas the objective in language generation is to simply be able to generate new valid examples from the unknown language, despite potentially not learning an exact representation of it. In this regard, the latter task is an easier task than the former.

An extremely bare-bones framing of this problem, devoid of any structural assumptions on language, is as follows. A language may simply be thought of as an abstract set $L$ of valid sentences or strings. If the alphabet that defines these sentences is finite (e.g., the finite English alphabet \{a,b,...,z\}), and sentences themselves are finite, then a language may further be thought of as a countable subset of a countable universe $U$ of all possible strings. Stated thus, the language identification task may be framed as follows: Given a finite dataset $S \subseteq L$ of positive examples from $L$, output (possibly the representation of) a set $L'$ such that $L'=L$. On the other hand, the language generation task becomes the following: Given a finite dataset $S \subseteq L$ of positive examples from $L$, output a new unseen example $z$ from $L$. In order to make these problems tractable, one typically assumes \textit{realizability}: namely, that the unknown language $L$ is one from a known collection $\cC=\{L_1,L_2,\dots\}$ of languages. Furthermore, the way that the dataset $S \subseteq L$ is obtained is up to further modeling: it may be generated as an online sequence by an adversary, or it may be generated as an i.i.d. sequence from a distribution supported on $L$.

The setup where the data may be generated in an online, adversarial fashion has been extensively studied in the literature. Several classical results have been established for language identification \citep{gold1967language,angluin1980inductive}, and more recently, several results have been established for language generation \citep{kleinberg2024language,li25generation,charikar25exploring}. As it turns out, the online language identification problem is in general quite hard, and intractable for many natural formal language collections even with the realizability assumption. In stark contrast, the online language generation problem is tractable for every countable language collection. Coming to the statistical setting, where the examples are generated i.i.d. from a distribution over the unknown target language, recent work by \citet*{kalavasis2025limits} studies algorithms that minimize the probability of identifying an incorrect language, or generating a string outside the underlying target language, as a function of the number $n$ of i.i.d. examples seen from the language. The results of \citet{kalavasis2025limits} characterize precise rates at which these tasks may be achieved. Importantly, all these results operate under the realizability assumption---that the data is generated entirely from some language from the reference collection.

Since the online identification problem is largely intractable even with this realizability assumption, whereas the generation problem is much more tractable, recent works \citep{raman25noisy,mehrotra2025language,bai2026language} have also studied the online generation problem beyond the realizable setting, where the input data stream may be corrupted to have noisy examples outside the target language (but without any signal as to which examples are noisy). This setting, which may be termed as the ``agnostic'' setting, is arguably more representative of real-world datasets. The results in these works obtain precise conditions on the nature of noisy examples in the input stream that allow for successful generation in the online setting. However, the study of a similar agnostic setup in the statistical setting where the data is drawn from a distribution has so far been lacking in the literature.

In this work, we aim to bridge this gap, by considering the agnostic setting for language identification and generation, where the data is drawn from an arbitrary distribution over the universe, with support not necessarily equal to any language from the reference collection. We propose reasonable objectives to study these problems in this more general agnostic setting, and derive statistical rates at which the objectives may be achieved.

\subsection{Agnostic Identification}
\label{sec:intro-agnostic-identification}

The precise statistical objective studied by \citet{kalavasis2025limits} for language identification in the realizable setting is the following. Let $\cD$ be an unknown distribution over the universe $U$, and let $\supp(\cD)$ denote its support (i.e., elements of $U$ that have positive mass under $\cD$). Given as input an i.i.d. dataset $\rS \sim \cD^n$, where $\cD$ satisfies $\supp(\cD)=L^\star$ for some $L^\star \in \cC$, the task of an identification algorithm $\cA$, possibly using randomness $\rr$, is to identify an index $\cA(\rS, \rr)$ so as to minimize:
\begin{align}
    \label{eqn:realizable-identification-benchmark}
    \e_{\rS \sim \cD^n\hspace{-0.2em},\hspace{0.1em} \rr}\left[\ind\{L_{\cA(\rS,\rr )} \neq L^\star\}\right].
\end{align}
\citet[Proposition 3.10]{kalavasis2025limits} shows that for any collection that can be identified in the (realizable) online setting, there exists an algorithm for which the quantity above goes down exponentially with the sample size $n$.

In the agnostic setting, given that the support of the underlying distribution $\cD$ may not exactly coincide with any language in the reference collection, the natural objective would be to instead identify the language in the collection that is \textit{most representative} of the distribution $\cD$, in that it is most likely to contain strings drawn from $\cD$. Towards this, we define the following objective for agnostic identification:
\begin{align}
    &\iderr(\cA, \cD, \cC, n) := \e_{\rS \sim \cD^n\hspace{-0.2em},\hspace{0.1em} \rr}\left[\p_{\rx\sim \cD}\left[\rx\not\in L_{\cA(\rS, \rr)}\right]-\inf_{L\in \cC} \p_{\rx\sim \cD}\left[\rx\not\in L\right]\right]. \label{eqn:agnostic-identification-benchmark}
\end{align}
To get a sense of this objective, consider the special case where realizability holds, i.e., $\supp(\cD)=L^\star$ for some $L^\star \in \cC$. In this case, $\inf_{L\in \cC} \p_{\rx\sim \cD}\left[\rx\not\in L\right]=0$, so the objective above simply equals $\e_{\rS \sim \cD^n\hspace{-0.2em},\hspace{0.1em} \rr}[\p_{\rx\sim \cD}\left[\rx\not\in L_{\cA(\rS, \rr)}]\right]$, which is always at most $\e_{\rS \sim \cD^n\hspace{-0.2em},\hspace{0.1em} \rr}\left[\ind\{L_{\cA(\rS,\rr )} \neq L^\star\}\right]$. Hence, \eqref{eqn:agnostic-identification-benchmark} would appear to be a \textit{weaker} objective than the objective \eqref{eqn:realizable-identification-benchmark} studied in the realizable case. However, as established in \citet[Theorem 3.1]{kalavasis2025limits}, it is impossible to obtain any vanishing rate for the objective in \eqref{eqn:agnostic-identification-benchmark} for any collection $\cC$ that is not identifiable in the online setting. As alluded to earlier, collections that are identifiable in the online setting need to satisfy a very restrictive condition \citep{gold1967language,angluin1980inductive}, making the task intractable for most interesting language collections.%

Coming back to the agnostic setup, the above implies that, \textit{even} if we restrict ourselves to special cases of distributions $\cD$ for which some language $L^\star \in \cC$ satisfies $\p_{\rx\sim \cD}\left[\rx\not\in L^\star\right]=\inf_{L\in \cC} \p_{\rx\sim \cD}\left[\rx\not\in L\right]$, it would still be hopeless to obtain a vanishing rate on an objective of the form \eqref{eqn:realizable-identification-benchmark} in general. This further motivates considering our slightly weaker objective in \eqref{eqn:agnostic-identification-benchmark} which allows us to gracefully handle both, distributions $\cD$ satisfying $\supp(\cD) \neq L^\star$ for any $L^\star \in \cC$, as well as collections $\cC$ that are not identifiable in the online setting. Indeed, our first result shows that we can make the objective in \eqref{eqn:agnostic-identification-benchmark} go down at an (almost) \textit{exponential} rate, for any countable collection $\cC$ and distribution $\cD$ satisfying that there exists some $L^\star \in \cC$ which attains $\inf_{L \in \cC}\p_{\rx \sim \cD}[\rx \notin L]$. Note again that our result guarantees such a rate even when the collection may not be identifiable in the online setting, and the support of the distribution is not equal to any language in the collection.

\begin{theorem}[Informal, see \Cref{thm:agnotic-identification-exponential-rate,thm:agnotic-identification-exponential-rate-optimal}]
    \label{thm:informal-agnostic-identification-ub}
    Let $\cC$ be any countable collection and $\cD$ be any distribution. If there exists $L^\star \in \cC$ satisfying $\p_{\rx \sim \cD}[\rx \notin L^\star]=\inf_{L \in \cC}\p_{\rx \sim \cD}[\rx \notin L]$, then for any function $g(n)=o(n)$, there exists an identification algorithm $\cA$ that satisfies
    $$
        \iderr(\cA, \cD, \cC, n) \lesssim e^{-g(n)}.
    $$
    Moreover, under this assumption, the best that any algorithm $\cA$ can do is $\iderr(\cA, \cD, \cC, n) \gtrsim e^{-n}.$
\end{theorem}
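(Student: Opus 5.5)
Write $\mathrm{err}(L) := \p_{\rx\sim\cD}[\rx\notin L]$. The plan is an empirical-risk-minimization scheme restricted to a growing finite prefix of the collection, together with a two-point lower bound.

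\textbf{Algorithm.} Enumerate $\cC=\{L_1,L_2,\dots\}$. Given $\rS$ of size $n$, choose a slowly growing cutoff $k(n)\to\infty$ and output
\[
\hat{i} \in \arg\min_{i\le k(n)} \widehat{\mathrm{err}}(L_i), \qquad \widehat{\mathrm{err}}(L_i)=\frac1n\sum_{j}\ind\{\rx_j\notin L_i\},
\]
breaking ties toward the smallest index. Let $i^\star$ be the smallest index attaining the infimum; it exists by assumption.

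\textbf{Upper bound.} Fix $\cD$ and assume $n$ is large enough that $k(n)\ge i^\star$. Among $L_1,\dots,L_{k(n)}$, split indices into the optimal ones, with $\mathrm{err}(L_i)=\mathrm{err}(L_{i^\star})$, which contribute zero excess error, and the suboptimal ones, with gap $\Delta_i=\mathrm{err}(L_i)-\mathrm{err}(L_{i^\star})>0$. A suboptimal $i$ is output only if $\widehat{\mathrm{err}}(L_i)\le\widehat{\mathrm{err}}(L_{i^\star})$. This is a deviation of the mean of i.i.d.\ variables in $[-1,1]$, so Hoeffding gives probability at most $e^{-n\Delta_i^2/2}$. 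Hence
\[
\iderr(\cA,\cD,\cC,n)\le \sum_{i\le k(n),\,\Delta_i>0}\Delta_i e^{-n\Delta_i^2/2}.
\]

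\textbf{Main obstacle.} The gaps $\Delta_i$ over $i\le k(n)$ can be arbitrarily small and $\cD$-dependent, so this bound is not uniformly exponential. I would handle it as follows.
\begin{itemize}
\item For the fixed $\cD$, only finitely many indices ever matter as $n$ grows. However, $\min_{i\le k}\Delta_i$ may tend to $0$ as $k\to\infty$, since the infimum over later suboptimal languages can approach the optimum.
\item Choose $k(n)$ to grow so slowly (depending on the given $g=o(n)$, not on $\cD$) that the rate beats $e^{-g(n)}$ eventually. For this, I would replace plain ERM with a tolerance rule: output the smallest index $i\le k(n)$ whose empirical error is within $\tau_n$ of the empirical minimum, where $\tau_n\to0$ with $n\tau_n^2\gtrsim g(n)$. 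This is possible since $g=o(n)$.
\item Then, for large $n$, $i^\star$ passes the test except with probability about $k(n)e^{-n\tau_n^2}$.
\item Every earlier index $i<i^\star$ has $\Delta_i>0$, and these finitely many gaps eventually exceed $2\tau_n$. So each is rejected except with probability $e^{-\Omega(n\Delta_i^2)}$, which is genuinely exponential.
\item The output is then $i^\star$, giving zero excess error, except with probability $\lesssim k(n)e^{-cg(n)}$. Taking $k(n)$ subexponential, and rescaling $g$, yields $\iderr\lesssim e^{-g(n)}$, with constants depending on $\cD$.
\end{itemize}
The delicate step is the trade-off among $\tau_n$, $k(n)$ and $g$, and this is exactly where the $o(n)$ rather than linear rate arises.

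\textbf{Lower bound.} Take $\cC=\{L_1,L_2\}$ with $L_1=\{a\}$ and $L_2=\{b\}$. Let $\cD_0$ put mass $1/2+\delta$ on $a$, and $\cD_1$ put mass $1/2-\delta$ on $a$, with the remaining mass on $b$ in each case. Both distributions attain the infimum, and choosing the wrong language costs $2\delta$.
\begin{itemize}
\item With probability $\ge c^n$, all $n$ samples equal $a$ under both distributions, and the samples are then identically distributed. On this event the algorithm errs under at least one of the two distributions with probability $\ge 1/2$.
\item This gives $\iderr\gtrsim \delta\,(1/2-\delta)^n\gtrsim e^{-Cn}$ for some $\cD$ satisfying the assumption.
\end{itemize}
This matches the claimed $e^{-n}$ order.
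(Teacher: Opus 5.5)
Your proposal is correct and essentially matches the paper. Your rule (the smallest index in the window whose empirical error is within $\tau_n$ of the empirical minimum) is a variant of the paper's \Cref{alg:weak_identify} (the largest index beating all predecessors by $2/f(n)$), and the analysis is the same: a constant gap from the smallest optimal index $i^\star$ to its finitely many predecessors, plus Hoeffding and a union bound at the vanishing scale $\tau_n$ (the paper's $1/f(n)$) for the rest of the window. Your lower bound is the paper's two-point signature-string argument specialized to $\cC=\{\{a\},\{b\}\}$. Two things to tighten: add an explicit pigeonhole step to obtain a single $\cD_b$ that works for infinitely many $n$, and note that the same argument (as in \Cref{thm:agnotic-identification-exponential-rate-optimal}) covers any collection containing two languages with signature strings.
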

The informal statement above is made precise in \Cref{sec:exponential-identification-rate,sec:exponential-identification-rate-best}. In fact, our algorithm has the stronger property that the language it identifies \textit{attains} $\inf_{L \in \cC}\p_{\rx \sim \cD}[\rx \notin L]$ with (almost) exponentially high probability.

Our identification algorithm requires that the infimum $\inf_{L \in \cC}\p_{\rx \sim \cD}[\rx \notin L]$ be attained by some language in the collection. Remarkably, our next result shows that this assumption is necessary in a very strong sense: if the infimum is not required to be attained within the collection, there exist distributions that necessitate arbitrarily slow rates!

\begin{theorem}[Informal, see \Cref{thm:arbitrary-slow-rate-identification}]
    \label{thm:informal-agnostic-identification-lb}
    There exists a countable collection $\cC$, such that for any identification algorithm $\cA$ and rate function $R(n)=o(1)$, there exists a distribution $\cD$ for which no $L^\star \in \cC$ attains $\inf_{L \in \cC}\p_{\rx \sim \cD}[\rx \notin L]$, and furthermore,
    $$
    \iderr(\cA, \cD, \cC, n) \gtrsim R(n)
    $$
    for infinitely many $n$.
\end{theorem}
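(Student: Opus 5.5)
We construct a single countable collection $\cC$ in which the infimum can fail to be attained, and then run a standard "arbitrarily slow rates" diagonalization against the fixed algorithm $\cA$. Take the universe $U=\bN$ and the collection $\cC=\{L_k : k\ge 1\}$ with $L_k=\{1,\dots,k\}$, the finite initial segments. For a distribution $\cD$ with full support on $\bN$ we have $\p_{\rx\sim\cD}[\rx\notin L_k]=\cD(\{k+1,k+2,\dots\})$. This quantity is strictly decreasing in $k$ and tends to $0$, so the infimum equals $0$ and is not attained. The excess error of outputting $L_k$ is then the tail mass $T_\cD(k):=\cD(\{x>k\})$, and hence
\[
\iderr(\cA,\cD,\cC,n)=\e_{\rS,\rr}\bigl[T_\cD(\cA(\rS,\rr))\bigr].
\]
The task therefore reduces to the following: force the algorithm's output index to be "too small" relative to where $\cD$ places its tail mass. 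This is unavoidable, because the algorithm sees only $n$ samples, and a large tail beyond every point it has reason to go to cannot be detected.

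Fix $\cA$ and $R(n)=o(1)$; we may assume $R$ is decreasing and $R\le 1/4$. We build $\cD$ as a mixture $\cD=\sum_i p_i\,\mathrm{Unif}$-type pieces placed on far-apart blocks. The weights $p_i$ are chosen so that the tails satisfy $\sum_{j\ge i}p_j\approx c\,R(n_i)$ along a rapidly increasing sequence $n_1<n_2<\cdots$, mimicking the classical construction of Devroye / Antos–Lugosi. We proceed inductively. Suppose $p_1,\dots,p_{i-1}$, the block locations, and $n_1,\dots,n_{i-1}$ are fixed. We pick $n_i$ so large that $R(n_i)$ is much smaller than the remaining mass budget. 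Then consider the distribution $\cD_i$ that places the remaining mass $q_i=\sum_{j\ge i}p_j$ in a single far point $m$. With $n_i$ samples, the probability that no sample hits beyond block $i-1$ is about $(1-q_i)^{n_i}$. We choose $q_i\le 1/n_i$ so that this is $\ge e^{-1}$-ish, while also arranging $q_i\ge 4R(n_i)$; this is possible for large $n_i$ since $R(n)=o(1)$ can be taken $\le 1/(4n)$ only eventually. \emph{Correction:} $R$ may be much larger than $1/n$, so the single-point trick is insufficient. Instead we spread $q_i$ uniformly over $N_i\gg n_i^2$ far points, so that the sample almost surely contains no repeated information. On the event $E_i$ that the sample's maximum lies below the start $M_i$ of block $i$, or more generally that the sample reveals only a vanishing fraction of block $i$, the law of $\cA$'s output is determined by the finitely many earlier choices.

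The key step, which we expect to be the main obstacle, is to show that $\cA$ cannot place its output beyond most of the tail mass. Conditional on the revealed sample, $\cA(\rS,\rr)$ is a random index, and it depends only on $\rS$, which has finite support structure up to the choice of block. Because the outputs for the finitely many "pre-block-$i$" sample configurations of size $n_i$ are tight random variables, we can choose $M_i$ so large that $\p[\cA(\rS,\rr)\ge M_i]$ is small. Handling samples that do hit block $i$ needs care: the algorithm could jump to the end of the block. We address this by placing the later blocks $i+1,\dots$ beyond the output (again by tightness, now over the countably many possible samples, choosing $M_{i+1}$ after block $i$'s geometry is fixed, with a limiting/compactness argument to ensure probability of outputting $\ge M_{i+1}$ is small uniformly). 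Then, on an event of constant probability, $T_\cD(\cA)\ge\sum_{j>i}p_j\ge 4R(n_i)$. This yields $\iderr\gtrsim R(n_i)$ for each $i$, hence for infinitely many $n$. Finally we verify that $\cD$ is a genuine distribution with no attaining $L\in\cC$, and we normalize the constants so that the statement reads $\iderr(\cA,\cD,\cC,n)\gtrsim R(n)$.
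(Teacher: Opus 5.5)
\textbf{The collection you chose cannot witness the statement, so the gap is fatal.} For the initial segments $L_k=\{1,\dots,k\}$, the algorithm that outputs $L_{\max\rS}$ achieves $\iderr\le 1/(n+1)$ for \emph{every} distribution $\cD$ on $\bN$.

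Here is why. Since $\inf_k T_\cD(k)=0$ always, the excess error of this algorithm is $T_\cD(\max\rS)=\min_{j\le n}T_\cD(\rx_j)$. Moreover, $T_\cD(\rx)$ is stochastically dominated by a uniform variable. Let $Q(t)$ be the least $k\ge 0$ with $T_\cD(k)\le t$ (note $T_\cD(0)=1$). Then
\[
\p[T_\cD(\rx)>t]=\p[\rx<Q(t)]=1-T_\cD(Q(t)-1)<1-t .
\]
Hence
\[
\e\bigl[T_\cD(\max\rS)\bigr]\le\int_0^1(1-t)^n\,dt=\frac{1}{n+1}.
\]
So for, say, $R(n)=n^{-1/2}$, no distribution can force $\iderr\gtrsim R(n)$ infinitely often against this algorithm.

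\textbf{Why the proposed fixes do not help.} You flagged the underlying tension in your ``correction'' but did not resolve it. To get error at least $R(n_i)\gg 1/n_i$, you need tail mass of order $R(n_i)$ that the output fails to cover. But $n_i$ samples hit any region of that mass many times, and in a nested chain an index past $\max\rS$ covers every point below it, whether or not that point was sampled.

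\begin{itemize}
\item Spreading $q_i$ over $N_i\gg n_i^2$ far points therefore changes nothing.
\item ``Placing blocks $i+1,\dots$ beyond the output'' is circular. The output depends on samples, and those samples land in those blocks with probability about $1-(1-4R(n_i))^{n_i}\to 1$.
\item Requiring both a constant-probability miss and mass at least $4R(n_i)$ forces $R(n_i)\lesssim 1/n_i$.
\end{itemize}

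\textbf{The missing idea.} You need a collection in which covering an \emph{unseen} point requires information that the sample does not contain. The paper takes $U=\bW\times\{-1,0,1\}$ and languages $L_I$ indexed by finite bit strings. Each $L_I$ contains $(w,I_w)$ for $w\le|I|$ and $(w,-1)$ beyond. The distributions $\cD_z$ draw $w\sim\cD_\bW$ and label it by $z_w$, with $z$ uniformly random.

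Then every unseen $w$ costs probability $1/2$ no matter which language is output. The Bousquet et al.\ construction puts mass $C\,R(n_i)$ on a block of $2k_i$ points with $n_ip_{k_i}\le k_i$. So each point of the block is unseen with probability at least $1/2$, even though the block as a whole may be hit many times. This gives expected excess error at least $C\,R(n_i)/4$, and reverse Fatou then fixes a single $z$ that works for infinitely many $n$. This random-label component is what the Devroye/Antos--Lugosi style argument you invoked actually relies on, and it is absent from a chain of nested languages.
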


\Cref{thm:informal-agnostic-identification-ub,thm:informal-agnostic-identification-lb} together establish that agnostic language identification, at least in the sense of the objective in \eqref{eqn:agnostic-identification-benchmark}, is \textit{tightly characterized} by a rather surprising criterion: that the quantity $\p_{\rx \sim \cD}[\rx \notin L]$ across languages in the collection  be infimized within the collection. If this criterion holds, then one can minimize \eqref{eqn:agnostic-identification-benchmark} at an optimal, exponentially fast rate; if it doesn't hold, then a priori one can only guarantee arbitrarily slow rates. The qualitative takeaway is as follows: if all that the language collection has are languages that get arbitrarily close to the infimum, but never quite attain it, then no matter how slowly an algorithm chases the infimum, it can't afford to chase it any faster. %

\subsection{Agnostic Generation}
\label{sec:intro-agnostic-generation}

We now turn towards studying language generation in the agnostic setting. We start again by revisiting the generation objective studied by \citet{kalavasis2025limits} in the realizable setting: Given as input an i.i.d. dataset $\rS \sim \cD^n$, where $\cD$ satisfies $\supp(\cD)=L^\star$ for some $L^\star \in \cC$, the objective of a generation algorithm $\cA$, possibly using randomness $\rr$, is to output a string $\cA(\rS,\rr)$ so as to minimize:
\begin{align}
    \label{eqn:realizable-generation-benchmark}
    \e_{\rS \sim \cD^n\hspace{-0.2em},\hspace{0.1em} \rr}\left[\ind\{\cA(\rS,\rr) \notin L^\star \setminus \rS\}\right].
\end{align}
\citet[Theorem 3.2]{kalavasis2025limits} show that for \textit{every} countable language collection, there exists a generation algorithm that achieves an exponential rate on this objective. Note that it is necessary to exclude $\rS$ from $L^\star$ in the objective; otherwise the algorithm can trivially output an example from $\rS$ itself.

In the agnostic setting, where the support of the distribution may be arbitrary, a natural thing to do is to simply replace $L^\star$ with $\supp(\cD)$. This gives rise to the following objective for agnostic generation:
\begin{align}
    &\generr(\cA, \cD, \cC, n) := \e_{\rS \sim \cD^n\hspace{-0.2em},\hspace{0.1em} \rr}\left[\ind\{\cA(\rS,\rr) \notin \supp(\cD) \setminus \rS\}\right]. \label{eqn:agnostic-generation-benchmark}
\end{align}
When $\cD$ is allowed to be arbitrary, the right-hand side above is entirely independent of the reference collection. %
If an algorithm has no reference to base the support of the unknown distribution, how is it supposed to generate new strings from its support? Indeed, our first result formally shows that this objective is intractable to achieve in general.

\begin{theorem}[Informal, see \Cref{thm:agnostic-generation-lower-bound}]
    \label{thm:informal-agnostic-generation-lb}
    Fix any $\eps \in (0,1)$. For any generation algorithm $ \cA $, there exists a distribution $ \cD $ such that
    $$
    \generr(\cA, \cD, \cC, n) \geq 1-\eps
    $$
    for infinitely many $n$.
\end{theorem}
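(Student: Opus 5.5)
We will prove the statement by a probabilistic-method argument: we pick $\cD$ at random from a carefully designed family, show that the expected error (over this random choice) is at least $1-\eps$ at infinitely many sample sizes, and then fix a single good $\cD$. Since $\generr$ does not depend on $\cC$, we may ignore the collection and work with a countable universe $U=\{u_1,u_2,\dots\}$. The guiding intuition is that the algorithm sees only $\rS$. If the support of $\cD$ consists of the elements of $\rS$ together with a few unseen elements placed ``randomly'' in an enormous unseen region, then any output outside $\rS$ lands in the support with small probability.

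Concretely, we first fix a single $n$ and construct a family of distributions. Take a large finite block $B\subseteq U$ of size $N$. For each $z\in B$, let $\cD_z$ put almost all of its mass, say $1-\delta$, on a fixed point $u_0$, and mass $\delta$ on $z$, with $\delta$ so small that $(1-\delta)^n\ge 1-\eps/4$. Under $\cD_z$ the sample is then $\rS=(u_0,\dots,u_0)$ with probability at least $1-\eps/4$. On that event the algorithm's output distribution $\cA((u_0,\dots,u_0),\rr)$ does not depend on $z$. Its output is correct only if it equals $z$, because $\supp(\cD_z)\setminus\rS=\{z\}$. If $z$ is drawn uniformly from $B$, the probability of outputting $z$ is at most $1/N\le\eps/4$. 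Hence some $z$ satisfies $\generr(\cA,\cD_z,\cC,n)\ge 1-\eps/2$. This settles a single $n$.

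The main obstacle is obtaining one distribution $\cD$ that works for infinitely many $n$ simultaneously, because the construction above tunes $\delta$ to $n$. To overcome this, we build $\cD$ inductively along a rapidly growing sequence $n_1<n_2<\cdots$, using disjoint blocks $B_1,B_2,\dots$ and masses $\delta_1\gg\delta_2\gg\cdots$ with $\sum_k\delta_k<1$. The remaining mass goes on $u_0$, and $\cD$ gives mass $\delta_k$ to one chosen point $z_k\in B_k$. At stage $k$, the points $z_1,\dots,z_{k-1}$ are already fixed. We choose $n_k$ large enough that with probability at least $1-\eps/8$ every $z_j$ with $j<k$ appears in the sample. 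We choose $\delta_k$, together with a bound $\sum_{j\ge k}\delta_j\le 2\delta_k$, small enough that with probability at least $1-\eps/8$ the sample at size $n_k$ contains no point from any of the later blocks $B_j$, $j\ge k$. On the intersection of these events, $\supp(\cD)\setminus\rS\subseteq\{z_j:j\ge k\}$, so the algorithm must hit one of the future points.

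The sample distribution on this event is determined by the choices already made, up to a negligible total-variation perturbation. We therefore choose $z_k$ uniformly from a block $B_k$ so large that the algorithm's output falls in $B_k$ at the chosen point with probability at most $\eps/8$. Hitting the later $z_j$ is handled the same way: we average over the later uniform choices, so the expected success is at most $\sum_{j\ge k}1/|B_j|\le\eps/8$ when $|B_j|$ grows geometrically. A union bound over these bad events gives an expected error, over the random choices, of at least $1-\eps$ at every $n_k$.

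The last step is to make this simultaneous for all $k$. We can take the $z_k$ to be independent uniform choices and apply reverse Fatou to the events ``error $<1-\eps$ at $n_k$''. Alternatively, we tighten the bounds to $\eps/2^{k}$ and use Borel--Cantelli, which shows that a single realization is good for all but finitely many $k$, hence for infinitely many $n$. Either version yields a fixed $\cD$ with $\generr(\cA,\cD,\cC,n)\ge 1-\eps$ for infinitely many $n$.
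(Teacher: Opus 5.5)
Your proof is correct, but it takes a more elaborate route than the paper.

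\textbf{The paper's construction.} The paper uses $U=\bN\times\{1,\dots,\lceil 1/\eps\rceil\}$. For a uniformly random label sequence $\rz$, it lets $\cD_{\rz}$ put mass $2^{-w}$ on $(w,\rz_w)$. In your language, each block $B_w=\{w\}\times\{1,\dots,\lceil 1/\eps\rceil\}$ hides one needle of \emph{fixed} mass $2^{-w}$. Fix any $n$, any sample and any $\rr$. An output in a block whose needle was seen is either in $\rS$ or off the support. An output in an unseen block hits the needle with probability $1/\lceil 1/\eps\rceil\le\eps$, because that needle is still uniform given the sample. Hence $\e_{\rz}[\generr]\ge 1-\eps$ at \emph{every} $n$, and one application of reverse Fatou finishes.

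\textbf{What this says about your obstacle.} Having to tune $\delta$ to $n$ is an artifact of your warm-up, which relied on the sample being essentially trivial. The only thing that matters is that unseen needles remain uniform on their blocks given the sample. That holds for any fixed masses once all needles are drawn independently. So the scale separation $\delta_k\ll 1/n_k\ll\delta_{k-1}$, the events about which $z_j$ appear, and the passage to a subsequence can all be dropped.

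\textbf{What your version buys.} It buys flexibility in the error budgets. Making them summable lets Borel--Cantelli show that almost every random $\cD$ in your family is bad at all but finitely many $n_k$. Your slack also gives $\generr\ge 1-\eps$ at infinitely many $n$ directly. The paper's argument literally yields $\limsup_n\generr\ge 1-\eps$, and must be rerun with $\eps/2$ to get the ``infinitely many $n$'' form.

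\textbf{Small repairs if you keep your construction.}
\begin{itemize}
\item Your good event should also include $u_0\in\rS$. For $k\ge 2$ the sample could consist only of $z_1,\dots,z_{k-1}$, which leaves $u_0$ as an unseen support point. Excluding this costs only $(\sum_j\delta_j)^{n_k}$.
\item The conditions defining $n_k$ and $\delta_k$ depend only on the masses, not on which points $z_j$ are. So both can be fixed deterministically before any $z_j$ is drawn. Then, on samples in $\{u_0,z_1,\dots,z_{k-1}\}^{n_k}$, the sample law and hence the algorithm's output are exactly independent of $(z_j)_{j\ge k}$. No total-variation approximation is needed.
\item The Fatou step phrased via the events ``error $<1-\eps$ at $n_k$'' needs $\p[\generr\ge 1-\eps]$ to be bounded below at each $n_k$. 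You get this from Markov, because your budget is roughly $3\eps/8$ rather than $\eps$. Alternatively, apply reverse Fatou to the errors themselves rather than to indicator events.
\end{itemize}
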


The negative result above either necessitates imposing structural assumptions tying the distribution to the collection, or introducing a comparative term with reference to the collection into the objective (like how we had the term $\inf_{L \in \cC}\p_{\rx \sim \cD}[\rx \notin L]$ to compare to for identification). 
However, while there was a natural way to measure the ``identification error'' of a language $L$ in the collection as $\p_{\rx \sim \cD}[\rx \notin L]$, it is not clear what the ``generation error'' of a language $L$ ought to be, and how one might compare it to the term in \eqref{eqn:agnostic-generation-benchmark}. Instead of outputting a single string, one might consider $\cA(\rS, \rr)$ to be a \textit{distribution} output by $\cA$; in this case, it is tempting to consider the reference collection $\cC$ to comprise of distributions, instead of languages. Unfortunately, the requirement to not output anything from the input sample introduces idiosyncrasies in this model. For example, the benchmark distribution to compare with no longer gets fixed by the input distribution alone, but may depend on the input sample or its size. This is discussed further in \Cref{sec:appendix-agnostic-generation-other-benchmarks}.  %

Thus, we turn towards studying minimal structural assumptions we can impose on the distribution $\cD$ which allow us to obtain meaningful rates for generation. As a reference point, recall that stipulating $\supp(\cD) = L^\star$ for some $L^\star \in \cC$, which brings us back to the realizable setting, allows obtaining exponential rates for \eqref{eqn:agnostic-generation-benchmark}. Perhaps surprisingly, we find that for finite collections, it suffices to assume a weaker condition, namely $\supp(\cD) \supseteq L$ for some $L \in \cC$, to achieve exponential rates. Note that this setting allows infinitely many strings outside any language from the collection to be seen in the input, and hence considerably extends beyond the realizable setting.

\begin{theorem}[Informal, see \Cref{thm:agnostic-generation-exponential-rate,thm:generation-exponential-rate-lower-bound}]
    Let $ \cC$ be any finite collection of languages and $\cD$ be any distribution. If  $\supp(\cD) \supseteq L  $ for some $L \in \cC$, %
    then there exists a generation algorithm $ \cA$ that satisfies
    $$
    \generr(\cA, \cD, \cC, n) \lesssim e^{-n}.
    $$
    Moreover, %
    under this assumption, the best that any algorithm can do is $\generr(\cA, \cD, \cC, n) \gtrsim e^{-n}$.
\end{theorem}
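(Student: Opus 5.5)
The plan has two parts: an upper bound via an explicit algorithm and a lower bound via a simple two-point argument. Let $\cC=\{L_1,\dots,L_k\}$ be finite and call $L_i$ \emph{good} if $L_i\subseteq\supp(\cD)$; by assumption at least one good language exists. If a good $L_i$ is infinite, any string of $L_i\setminus\rS$ lies in $\supp(\cD)\setminus\rS$.

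\textbf{Algorithm.} Fix an enumeration of $U$. Given $\rS$, consider the languages $L_i$ that are infinite and satisfy $\rS\cap L_i \supseteq$ (the first $m(n)$ elements of $L_i$ in the enumeration), for a slowly growing $m(n)$ such as $m(n)=1$ or a constant. Among the surviving languages, intersect them (or pick one by a fixed rule) and output the smallest unseen element of the chosen language.

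A cleaner version, whose constants do not depend on $n$, is the following. Among infinite $L_i$, call $L_i$ \emph{consistent with $\rS$} if $\rS$ contains every element of $L_i$ among the first $t$ enumerated strings of $U$, where $t$ is a fixed threshold chosen depending on $\cD$ (only existence is needed, since the rate constants may depend on $\cD$). Any good $L_i$ has all its elements in $\supp(\cD)$, so a fixed finite prefix of $L_i$ has minimum mass $p>0$. Every element of that prefix appears in $\rS$ with probability $\ge 1-t(1-p)^n$, so good languages remain consistent with exponentially high probability.

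For bad languages, each bad $L_j$ contains some $u_j\notin\supp(\cD)$. Choose $t$ beyond every such witness, which is possible because $\cC$ is finite. Then a bad $L_j$ is never consistent, since $u_j$ never appears in $\rS$. Hence with probability $1-ke^{-cn}$ the consistent set is nonempty and consists only of good languages. Output the first element of the first consistent language (in index order) not lying in $\rS$; this element exists since the language is infinite.

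\textbf{Finite languages.} If every good language is finite, I would handle it separately: let $L$ be a finite good language, with $\supp(\cD)\supseteq L$. Since $\supp(\cD)$ is arbitrary, I would instead use an ordering-based rule, outputting an element of $L\setminus\rS$ when one exists. That is impossible once $\rS\supseteq L$, which happens with high probability, so the paper presumably restricts to infinite languages, as is standard. I would assume all languages in $\cC$ are infinite, as in the prior work, and note this as the main subtle point. The upper bound then follows:
\[
\generr\le \p[\text{some good prefix element missing}]\le t(1-p)^n .
\]

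\textbf{Lower bound.} Take $\cC=\{L\}$ and let $\cD$ put mass $1/2$ on a point $a$ and spread the rest over $L$. Alternatively use the standard argument of \citet{kalavasis2025limits}: with probability $\gtrsim c^n$ the sample is all equal to one point. On that event the algorithm's output distribution is fixed, and two distributions agreeing on that point but having different supports force an error on one of them. Concretely, let $\cD_1,\cD_2$ each put mass $1/2$ on $a$, with the remaining mass on $L\cup A_1$ and $L\cup A_2$ respectively, for disjoint infinite sets $A_1,A_2$ chosen so the algorithm's output on $(a,\dots,a)$ lies in $A_1$ with probability at most $1/2$, say. Then one of $\cD_1,\cD_2$ incurs error at least $2^{-n}\cdot\frac12$ on that event. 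Making this work for infinitely many $n$ with a single $\cD$ requires a diagonalization or pigeonhole over $n$: pick the $A_i$ that fails for infinitely many $n$. I expect this to be the main technical nuisance, though it is routine.
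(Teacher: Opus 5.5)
Both halves of your proposal have a genuine gap.

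\textbf{Upper bound.} Your threshold $t$ is chosen using $\cD$: it must lie beyond the witnesses $u_j\notin\supp(\cD)$ of the bad languages. So you have described a different algorithm for each $\cD$. The rate constants may depend on $\cD$, but the algorithm may not. The matching lower bound has the form ``for every $\cA$ there is a $\cD$''. Moreover, if $\cA$ could depend on $\cD$ the problem would be trivial: output the first unseen string of a known good language, with error $0$.

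No fixed $t$ works. Take $U=\{1,2,\ldots\}$ enumerated in order, $L_1=U$, $L_2$ the even numbers, and $\supp(\cD)=\{1,\ldots,t\}\cup L_2$ for even $t$. The bad language $L_1$ is consistent with probability tending to $1$ and comes first, so your rule outputs $t+1\notin\supp(\cD)$. Letting $t=t(n)\to\infty$ does not help either. The good language then stays consistent only if ever rarer strings all appear, and the failure probability is no longer exponentially small.

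The missing idea is to let the sample choose the threshold. Let $r_i$ be the index of the first string of $L_i$ not in $\rS$; then $L_i$ is consistent at level $t$ exactly when $r_i>t$. So take $o\in\arg\max_i r_i$ and output $u_{r_o}$; this is \Cref{alg:generation}.
\begin{itemize}
\item A bad language always has $r_i\le i(\cC,\cD)$, because its witness never appears; $i(\cC,\cD)$ is finite for finite $\cC$.
\item A good $L_m$ has $r_m>i(\cC,\cD)$ as soon as its finitely many strings of index at most $i(\cC,\cD)$ have all appeared. This fails with probability at most $|I|(1-p)^n$, where $I$ is the set of those indices and $p>0$ is their minimum mass.
\end{itemize}
Hence $L_o$ is good and $u_{r_o}\in\supp(\cD)\setminus\rS$. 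Your prefix and witness estimates carry over, with $i(\cC,\cD)$ appearing only in the analysis, never in the algorithm.

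\textbf{Lower bound.} With $\cC=\{L\}$, every admissible $\cD$ has $L\subseteq\supp(\cD)$. Outputting the first string of $L$ not in $\rS$ therefore gives $\generr=0$, so a single language can never witness a lower bound. The same flaw breaks your pair $\cD_1,\cD_2$. On input $(a,\ldots,a)$, both $\supp(\cD_1)\setminus\rS$ and $\supp(\cD_2)\setminus\rS$ contain the infinite set $L$, which the algorithm knows. An output in $L$ is correct under both, so the claimed $2^{-n}/2$ does not follow, however the output splits between $A_1$ and $A_2$.

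The two-point argument needs two languages of $\cC$ whose unseen parts are disjoint after a sample that is equally likely under both distributions. This is why the formal result assumes $L,L'\in\cC$ with $|L\cap L'|<\infty$ and some $s\notin L\cup L'$.
\begin{itemize}
\item If $L\cap L'=\emptyset$, let $\cD_0$ put mass $1/2$ on $s$ and the rest with full support on $L$, and let $\cD_1$ do the same on $L'$. Then $\rS=s^n$ has probability $2^{-n}$ under both, while $\supp(\cD_0)\setminus s^n=L$ and $\supp(\cD_1)\setminus s^n=L'$ are disjoint. Every output errs under one of them, and averaging gives $\max_{b}\generr(\cA,\cD_b,\cC,n)\ge 2^{-(n+1)}$ for every $n$. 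Since there are only two fixed candidates, one $b$ works for infinitely many $n$, and no diagonalization is needed.
\item If $m=|L\cap L'|>0$, set $\eps=1/(16m)$ and put mass $(1-\eps)/m$ on each common string. Spread the remaining $\eps$ with full support on $L\setminus L'$ (respectively $L'\setminus L$). Then condition on samples whose set of distinct strings is exactly $L\cap L'$, an event of probability at least $e^{-n/(8m)}/2$ for $n\ge 8m^2$.
\end{itemize}
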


We remark that our generation algorithm works more generally for any countable collection, under the assumption that the distribution satisfies an intuitive condition stated in \Cref{sec:agnostic-generation-exponential-rate}. In fact, our algorithm is qualitatively different from previous algorithms, several of which crucially depend on the input being consistent with some language in the collection.

With this, we proceed to establish all our results in detail.

\section{Agnostic Language Identification}
\label{sec:agnostic-identification}

In this section, we describe all our results about agnostic language identification. Recall the objective in agnostic identification: Given as input a sample $\rS$ of $n$ examples drawn i.i.d. from an unknown distribution $\cD$, an algorithm $\cA$, using randomness $\rr$, predicts an index $\cA(\rS, \rr)$ in reference to a countable collection $\cC=\{L_1,L_2,\dots\}$, so as to minimize the objective:
\begin{align*}
    &\iderr(\cA, \cD, \cC, n) := \e_{\rS \sim \cD^n\hspace{-0.2em},\hspace{0.1em} \rr}\left[\p_{\rx\sim \cD}\left[\rx\not\in L_{\cA(\rS, \rr)}\right]-\inf_{L\in \cC} \p_{\rx\sim \cD}\left[\rx\not\in L\right]\right].
\end{align*}
For notational convenience, we will denote the quantity $\p_{\rx\sim \cD}\left[\rx\not\in L'\right]-\inf_{L\in \cC} \p_{\rx\sim \cD}\left[\rx\not\in L\right]$ to be the \textit{excess error} of language $L'$. Furthermore, given a sample $\rS \sim \cD^n$, we will use $\rx \sim \rS$ to denote drawing $\rx$ from the uniform distribution over $\rS$. In this case, the quantity $\p_{\rx\sim \rS}\left[\rx\not\in L\right]$ denotes the empirical error of language $L$.

\subsection{Almost Exponential Rate When $\inf$ Attained}
\label{sec:exponential-identification-rate}

To begin, let us assume that the collection $\cC$ and the underlying distribution $\cD$ satisfy that $\exists L^\star \in \cC$ satisfying $\p_{\rx \sim \cD}[\rx \notin L^\star]=\inf_{L \in \cC}\p_{\rx \sim \cD}[\rx \notin L]$. In this case, we will give an algorithm that achieves an (almost) exponentially decaying excess error rate.

Arguably, the first natural algorithm to consider here is \textit{Empirical Risk Minimization}. Namely, consider an enumeration of the collection as $\cC=\{L_1,L_2,\dots\}$. Then, consider the algorithm $\cA$ that returns $L_{\cA(\rS)}$, where $\cA(\rS) \in \arg\min_{i \in \{1,2,\dots, f(n)\}} \p_{\rx \sim \rS}[\rx \notin L_i]$, for any function $f(n)$ that increases to infinity. Namely, $\cA$ returns a language that minimizes the empirical error incurred on the sample from within a finite window of the collection, whose size $f(n)$ increases with the sample size $n$. Unfortunately, standard analyses only guarantee a polynomial rate for this algorithm.

To see this, observe that Hoeffding's inequality, together with the union bound, guarantees that the empirical errors $\p_{\rx \sim \rS}[\rx \notin L_i]$ computed by $\cA$ estimate $\p_{\rx \sim \cD}[\rx \notin L_i]$ to within a precision of $\eps$ simultaneously for every $L_i$ in the window $\{L_1,\dots,L_{f(n)}\}$, with probability at least $1-2f(n)\cdot \exp(-2\eps^2n)$. Since we assumed the existence of some $L^\star \in \cC$ that attains $\inf_{L \in \cC}\p_{\rx \sim \cD}[\rx \notin L]$, this $L^\star$ is guaranteed to belong to the interval $\{L_1,L_2,\dots,L_{f(n)}\}$ for sufficiently large $n$. Then, the empirical minimizer returned by the algorithm is guaranteed to have excess error at most $2\eps$. This is because the empirical error of $L^\star$ will be at most $\eps$ off its true error, and the empirical error of any language whose excess error is at least $2\eps$ will also be at most $\eps$ off its true error; so, any such language will not minimize the empirical error. %
We thus have that $\iderr(\cA, \cD, \cC, n) \lesssim 2\eps + 2f(n)\cdot \exp(-2\eps^2n)$, %
which, upon minimizing $\eps$, yields $\iderr(\cA, \cD, \cC, n) \lesssim \sqrt{\frac{\log(f(n)) + \log(n)}{n}}$. This is drastically slower than the near-exponential rate that we are targetting.

\begin{algorithm}[t]
    \caption{Agnostic Identification}
        \begin{algorithmic}[1]\label{alg:weak_identify}
        \Require A sample $\rS = (\rx_1, \dots, \rx_n)$, a function $f:\bN \to \bN$ satisfying $\lim_{n  \to \infty}f(n)=\infty$, a countable language collection $\cC=\{L_1,L_2,\dots\}$
        \State Let $\cA(\rS) $ be the largest index $i\in\left\{  1,\ldots,f{\left(n \right)}\right\} $ such that for all $ j<i $, it holds that
        $$\p_{\rx\sim \rS}\left[\rx\not\in L_j\right]-\p_{\rx\sim \rS}\left[\rx\not\in L_i\right] > \frac{2}{f{\left(n \right)}}$$
        \State \Return $L_{\cA(\rS)}$
    \end{algorithmic}
\end{algorithm}

Towards obtaining this near-exponential rate, we propose \Cref{alg:weak_identify} for agnostic identification. Instead of returning the minimizer of the empirical error from the window $\{L_1,\dots,L_{f(n)}\}$, the algorithm returns the largest-indexed language that beats every language before it in empirical error by a margin of at least $2/f(n)$. This simple yet crucial modification allows us to restrict our attention solely to the smallest-indexed language in the collection that attains $\inf_{L \in \cC}\p_{\rx \sim \cD}[\rx \notin L]$; since there are only constantly many languages before this language in the collection, we can guarantee a near-exponential rate for large enough $n$.

\begin{theorem}[Agnostic Identification Almost Exponential Rate]
    \label{thm:agnotic-identification-exponential-rate}
    Let $ \cC=\{L_1,L_2,\dots\}$ be a countable collection of languages and $ \cD $ be any distribution over a universe $U$. Suppose there exists $L^{\star}\in \cC$ such that $ \p_{\rx\sim \cD}\left[\rx\not\in L^\star\right]=\inf_{L\in \cC} \p_{\rx\sim \cD}\left[\rx\not\in L\right]$. Let $f:\bN \to \bN $ be any function satisfying $\lim_{n \to \infty}f(n)=\infty$. %
    Then, for large enough $n$ (depending only on $ \cC $, $ \cD $ and $ f $), with probability $ 1-2f(n)\exp{\left( -\frac{n}{2f(n)^2}\right)} $ over $ \rS\sim \cD^{n} $, \Cref{alg:weak_identify} returns $ L_{\cA(\rS)} $ satisfying $\p_{\rx\sim \cD}\left[\rx\not\in L_{\cA(\rS)}\right]= \inf_{L\in \cC} \p_{\rx\sim \cD}\left[\rx\not\in L\right]$. Hence,
    \begin{align*}
        \iderr(\cA, \cD, \cC, n)\le 2f(n)\exp{\left( -\frac{n}{2f(n)^2}\right)}.
    \end{align*}
\end{theorem}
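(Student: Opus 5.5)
Write $\mathrm{err}(L)=\p_{\rx\sim\cD}[\rx\notin L]$ and $\widehat{\mathrm{err}}(L)=\p_{\rx\sim\rS}[\rx\notin L]$. Let $\mathrm{OPT}=\inf_{L\in\cC}\mathrm{err}(L)$ and let $i^\star$ be the \emph{smallest} index with $\mathrm{err}(L_{i^\star})=\mathrm{OPT}$; it exists by assumption. Every $j<i^\star$ then has $\mathrm{err}(L_j)>\mathrm{OPT}$. Since there are finitely many such $j$, the gap
\[
\Delta:=\min_{j<i^\star}\bigl(\mathrm{err}(L_j)-\mathrm{OPT}\bigr)
\]
is strictly positive (take $\Delta=\infty$ if $i^\star=1$).

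Choose $n$ large enough that $f(n)\ge i^\star$ and $4/f(n)<\Delta$; this depends only on $\cC,\cD,f$. Let $E$ be the event that $|\widehat{\mathrm{err}}(L_i)-\mathrm{err}(L_i)|\le 1/(2f(n))$ simultaneously for all $i\le f(n)$. Hoeffding's inequality with $\eps=1/(2f(n))$ gives a per-language failure probability of $2\exp(-2\eps^2 n)=2\exp(-n/(2f(n)^2))$. A union bound over the $f(n)$ indices gives $\p[E]\ge 1-2f(n)\exp(-n/(2f(n)^2))$.

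On $E$ I will show that $\mathcal{A}(\rS)$ attains $\mathrm{OPT}$, in three steps.

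\textbf{Step 1: $i^\star$ is a valid candidate.} For $j<i^\star$ we have
\[
\widehat{\mathrm{err}}(L_j)-\widehat{\mathrm{err}}(L_{i^\star})\ge \Delta-1/f(n)>3/f(n)>2/f(n),
\]
so the set of valid indices is nonempty and $\mathcal{A}(\rS)\ge i^\star$.

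\textbf{Step 2: a larger index cannot be suboptimal.} Suppose the returned index is $i>i^\star$. Validity forces $\widehat{\mathrm{err}}(L_{i^\star})-\widehat{\mathrm{err}}(L_i)>2/f(n)$. On $E$ this gives
\[
\mathrm{err}(L_i)<\mathrm{err}(L_{i^\star})-2/f(n)+1/f(n)<\mathrm{OPT},
\]
which is impossible. So in fact $\mathcal{A}(\rS)=i^\star$, and a fortiori the returned language attains $\mathrm{OPT}$. The margin $2/f(n)$ was chosen to beat the combined estimation slack $2\eps=1/f(n)$.

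\textbf{Step 3: bound the expectation.} The excess error lies in $[0,1]$ and is $0$ on $E$. Hence $\iderr\le \p[E^c]\le 2f(n)\exp(-n/(2f(n)^2))$.

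The main subtlety is Step 1. We need the gap $\Delta$ to be a fixed positive constant, which is exactly why $i^\star$ must be the \emph{first} optimal index: only finitely many languages precede it, so the minimum defining $\Delta$ is over a finite set. Once $\Delta>0$ is fixed, the growing $f(n)$ eventually makes the margin $2/f(n)$ and the estimation slack negligible relative to $\Delta$. Everything else is routine Hoeffding plus a union bound.
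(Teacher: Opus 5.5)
Your proposal is correct and follows essentially the same argument as the paper: the smallest optimal index $i^\star$, a positive gap to the finitely many earlier languages, Hoeffding plus a union bound over the window, and the $2/f(n)$ margin ruling out every index other than $i^\star$. The only cosmetic difference is that you concentrate each empirical error to within $1/(2f(n))$ and use the triangle inequality, whereas the paper applies Hoeffding directly to the empirical-error differences relative to $L_{i^\star}$ at scale $1/f(n)$; both give the same bound $2f(n)\exp(-n/(2f(n)^2))$.
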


\begin{proof}
   Let $ L_{i_{\star}} $ be the language in $ \cC $ with the smallest  index $ i_{\star}$ such that $ \p_{\rx\sim \cD}\left[\rx\not\in L_{i_{\star}}\right] = \inf_{L\in \cC} \p_{\rx\sim \cD}\left[\rx\not\in L\right]$. Since $\lim_{n \to \infty} f(n)=\infty$, we know that for large enough $n$, $ f(n) \geq i_{\star}$, implying that the algorithm will consider $ L_{i_{\star}} $ as a candidate. Now, since $ i_{\star} $ is the smallest index such that $ \p_{\rx\sim \cD}\left[\rx\not\in L_{i_{\star}}\right] = \inf_{L\in \cC} \p_{\rx\sim \cD}\left[\rx\not\in L\right]$, we know that for all $ j<i_{\star} $, it holds that  $\p_{\rx\sim \cD}[\rx\not\in L_{j}] > \p_{\rx\sim \cD}[\rx\not\in L_{i_{\star}}]$. Since there are only finitely many $j < i$,  there exists some constant $ c_{\gap}>0 $ such that for all $ j<i_{\star} $ it holds that  $\p_{\rx\sim \cD}[\rx\not\in L_{j}]-\p_{\rx\sim \cD}[\rx\not\in L_{i_{\star}}]   \geq c_{\gap} $.

   Now, by Hoeffding's inequality, for any $ j\in\left\{  1,\ldots,f(n)\right\}  $,
    \begin{align*}
        &\p_{\rS\sim\cD^{n}}\left[\left|\p_{\rx\sim \rS}\left[\rx\not\in L_{j}\right] - \p_{\rx\sim \rS}\left[\rx\not\in L_{i_\star} \right] -\left(\p_{\rx\sim \cD}[\rx\not\in L_{j}] - \p_{\rx\sim \cD}[\rx\not\in L_{i_\star}]\right)\right|\geq \frac{1}{f(n)}\right] \leq  2\exp{\left( -\frac{n}{2f(n)^2}\right)}.
    \end{align*}
    Thus, by the union bound, we have that with probability at least $ 1-2f(n)\exp{\left( -\frac{n}{2f(n)^2}\right)} $, it holds for all $j\in\left\{  1,\ldots,f(n)\right\}  $ that
    \begin{align*}
       &\left|\p_{\rx\sim \rS}\left[\rx\not\in L_{j}\right] - \p_{\rx\sim \rS}\left[\rx\not\in L_{i_\star} \right] -\left(\p_{\rx\sim \cD}[\rx\not\in L_{j}] - \p_{\rx\sim \cD}[\rx\not\in L_{i_\star}]\right)\right|\leq \frac{1}{f(n)}.
    \end{align*}
    Now, on this event, we have that for all $ j<i_{\star} $,
    \begin{align*}
       &\p_{\rx\sim \rS}\left[\rx\not\in L_{j}\right] - \p_{\rx\sim \rS}\left[\rx\not\in L_{i_\star} \right]\geq \left(\p_{\rx\sim \cD}[\rx\not\in L_{j}] - \p_{\rx\sim \cD}[\rx\not\in L_{i_\star}]\right)-\frac{1}{f(n)}\geq c_{\gap}-\frac{1}{f(n)}.
    \end{align*}
    But notice also that for $ n $ large enough, we will have that $ c_{\gap} > \frac{3}{f(n)} $, implying that for all $ j<i_{\star} $,
    \begin{align*}
       \p_{\rx\sim \rS}\left[\rx\not\in L_{j}\right] - \p_{\rx\sim \rS}\left[\rx\not\in L_{i_\star} \right] > \frac{2}{f(n)}.
    \end{align*}
    Thus, \Cref{alg:weak_identify} will consider selecting $i_{\star}$ as $\cA(\rS)$. Now for $ j>i_{\star} $, we have by definition of $i_\star$ that
    \begin{align*}
     \p_{\rx\sim \cD}[\rx\not\in L_j]-\p_{\rx\sim \cD}[\rx\not\in L_{i_\star}] \geq 0.
    \end{align*}
    Thus, for all $ j>i_{\star} $ it holds that
    \begin{align*}
       &\p_{\rx\sim \rS}\left[\rx\not\in L_{j}\right] - \p_{\rx\sim \rS}\left[\rx\not\in L_{i_{\star}} \right]\geq -\frac{1}{f(n)}+\left(\p_{\rx\sim \cD}[\rx\not\in L_{j}] - \p_{\rx\sim \cD}[\rx\not\in L_{i_{\star}}]\right)\geq -\frac{1}{f(n)} \\
       \implies\qquad& \p_{\rx\sim \rS}\left[\rx\not\in L_{i_{\star}} \right] - \p_{\rx\sim \rS}\left[\rx\not\in L_{j}\right] \le \frac{1}{f(n)},
    \end{align*}
    implying that the algorithm will not select $j$ as $\cA(\rS)$ for any $j>i_{\star} $. Thus, we have that the algorithm will return $ L_{i_{\star}} $, with probability at least $ 1-2f(n)\exp{\left( -\frac{n}{2f(n)^2}\right)} $, for large enough $ n $. The law of total expectation then yields the bound on $\iderr(\cA, \cD, \cC, n)$ claimed in the theorem.
\end{proof}
\subsection{Exponential Rate Optimal When $\inf$ Attained}
\label{sec:exponential-identification-rate-best}

We now show that, under the assumption that $\inf_{L \in \cC}\p_{\rx \sim \cD}[\rx \notin L]$ is attained within the collection, an exponential rate is the optimal rate that can be achieved by any algorithm.
\begin{restatable}[Agnostic Identification Exponential Rate Optimal]{theorem}{AgnosticIdentificationExpRateOptimal}
    \label{thm:agnotic-identification-exponential-rate-optimal}
    Let $\cC$ be any collection of languages over a universe $U$ satisfying that there exist $ L,L'\in\cC$ with both $L \setminus \left(\bigcup_{\tilde{L}\in \cC,\tilde{L}\neq L} \tilde{L}\right) \neq \emptyset$ and $L'\setminus \left(\bigcup_{\tilde{L}\in \cC,\tilde{L}\neq L'} \tilde{L}\right) \neq \emptyset$. Then, for any identification algorithm $ \cA $ using randomness $\rr$, there exists a distribution $ \cD $ over $U$ such that there exists $ L^\star\in \cC $ with $ \p_{\rx \sim \cD}\left[\rx \notin L^\star\right]=\inf_{\tilde{L}\in\cC}\p_{\rx \sim\cD}[\rx \notin \tilde{L}] $, and it further holds that
    \begin{align*}
        \iderr(\cA, \cD, \cC, n) \geq \exp(-5n)
    \end{align*}
    for infinitely many $ n $.
\end{restatable}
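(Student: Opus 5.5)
Fix $x \in L \setminus \bigcup_{\tilde L \neq L}\tilde L$ and $x' \in L' \setminus \bigcup_{\tilde L\neq L'}\tilde L$. The plan is a two-point (Le Cam style) argument with two distributions supported on $\{x,x'\}$:
\[
\cD_1 = \tfrac{2}{3}\delta_x + \tfrac13\delta_{x'}, \qquad \cD_2 = \tfrac13\delta_x + \tfrac23\delta_{x'}.
\]
Under $\cD_1$, every $\tilde L \neq L$ misses $x$, so its error is at least $2/3$. The language $L$ contains $x$ but not $x'$, so its error is exactly $1/3$. Hence $L$ attains the infimum, and every other language has excess error at least $1/3$. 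Symmetrically, under $\cD_2$ the language $L'$ is the unique minimizer, and every other language has excess error at least $1/3$. (Exact constants may be tuned to reach the stated $e^{-5n}$.) Both distributions therefore satisfy the attainment hypothesis.

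The key observation is that the sample $\rx_1=\dots=\rx_n=x$ has probability $(2/3)^n$ under $\cD_1$ and $(1/3)^n$ under $\cD_2$. Let $\mathbf S_x$ denote this all-$x$ sample and $p_n := \p_{\rr}[L_{\cA(\mathbf S_x,\rr)} = L]$. There are two cases.
\begin{itemize}[leftmargin=*]
\item If $p_n \ge 1/2$: on the event $\rS=\mathbf S_x$, the algorithm outputs $L\neq L'$ with probability at least $1/2$. Under $\cD_2$ this incurs excess error at least $1/3$, so $\iderr(\cA,\cD_2,\cC,n)\ge \tfrac13\cdot\tfrac12\cdot 3^{-n}$.
\item If $p_n < 1/2$: the output differs from $L$ with probability more than $1/2$, so $\iderr(\cA,\cD_1,\cC,n)\ge \tfrac13\cdot\tfrac12\cdot(2/3)^n$.
\end{itemize}
In either case, some $\cD_i$ has error at least $\tfrac16 3^{-n}\ge e^{-5n}$ for $n\ge1$, since $\log 3 + \log 6/n \le 5$.

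The main obstacle is the quantifier order: the statement fixes a single $\cD$ that must work for infinitely many $n$, whereas the case analysis above picks $\cD_i$ depending on $n$. I resolve this by pigeonhole. For each $n$ one of the two cases holds, so one of the indices $i\in\{1,2\}$ is selected for infinitely many $n$. Taking that $\cD_i$, the bound holds for infinitely many $n$, and this $\cD_i$ is independent of $n$ since both distributions are fixed in advance. Finally, I check that the witness points exist (the hypothesis gives them) and that $L\neq L'$. The hypothesis implicitly requires $L\neq L'$: if $L=L'$, the argument degenerates and I would simply use the hypothesis with two distinct languages, which is its intended reading.
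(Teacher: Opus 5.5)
Your proof is correct and follows essentially the same two-point argument as the paper. Both use two distributions supported on the signature strings, each making a different one of $L, L'$ the unique minimizer, a common sample that has exponentially small but positive probability under both, and a pigeonhole over $n$ to fix a single distribution. The differences are cosmetic: you use weights $2/3, 1/3$ instead of $3/4, 1/4$, the all-$x$ sample instead of the paper's balanced sample, and an explicit case split on $p_n$ instead of averaging the two risks. Your final constant $\tfrac16 3^{-n} \ge \exp(-5n)$ for $n \ge 1$ also checks out.
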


The lower bound instance holds for any collection $\cC$ having two languages $L$ and $L'$ that contain ``signature'' strings $s_0$ and $s_1$ respectively, such that these strings do not belong to any other language in the collection. We can then construct two distributions $\cD_0$ and $\cD_1$ supported only on these two signature strings, such that $L$ attains $\inf_{\tilde{L} \in \cC}\p_{\rx \sim \cD_0}[\rx \notin \tilde{L}]$, while $L'$ attains $\inf_{\tilde{L} \in \cC}\p_{\rx \sim \cD_1}[\rx \notin \tilde{L}]$. Furthermore, both these distributions produce a common input of $n$ examples with at least an exponential probability. Conditioned on this input, the language output by an algorithm $\cA$ must not equal at least one of $L$ or $L'$: if it doesn't equal $L$, then the algorithm suffers constant excess error under $\cD_0$, and vice versa for $L'$. Randomizing over $\cD_0$ and $\cD_1$ then gives $\iderr(\cA, \cD, \cC, n) \gtrsim e^{-n}$ for infinitely many $n$, either for $\cD=\cD_0$ or $\cD=\cD_1$. The formal proof is given below.

\begin{proof}
    Fix $s_{0}\in L \setminus \left(\bigcup_{\tilde{L}\in \cC,\tilde{L}\neq L} \tilde{L}\right)$ and $s_{1}\in L'\setminus \left(\bigcup_{\tilde{L}\in \cC,\tilde{L}\neq L'} \tilde{L}\right)$. Define distributions $\cD_{0}$ and $\cD_{1}$ as
    \begin{align*}
      &\p_{\rx\sim \cD_{0}}[\rx=s_{0}]=\frac{3}{4},\quad \p_{\rx\sim \cD_{0}}[\rx=s_{1}]=\frac{1}{4};\\
      &\p_{\rx\sim \cD_{1}}[\rx=s_{0}]=\frac{1}{4},\quad \p_{\rx\sim \cD_{1}}[\rx=s_{1}]=\frac{3}{4}.
    \end{align*}
    Then, since $s_{0}$ is not in any language other than $ L $, we have that $\p_{\rx \sim \cD_0}[\rx \notin \tilde{L}] \ge \frac{3}{4} $ for every $ \tilde{L}\neq L $. Similarly, we also have that $ \p_{\rx \sim \cD_1}[\rx \notin \tilde{L}] \ge \frac{3}{4} $ for every $ \tilde{L}\neq L' $. Furthermore, we can also observe that
    \begin{align*}
        &\inf_{\tilde{L}\in \cC}\p_{\rx\sim\cD_{0}}[\rx\notin \tilde{L}]=\p_{\rx\sim\cD_{0}}[\rx\notin L]=\frac{1}{4},
        \\
        &\inf_{\tilde{L}\in \cC}\p_{\rx\sim\cD_{1}}[\rx\notin \tilde{L}]=\p_{\rx\sim\cD_{1}}[\rx\notin L']=\frac{1}{4}.
    \end{align*}
    Now, consider any identification algorithm $\cA $ using randomness $\rr$, which, upon taking as input a sample $ \rS $ of size $ n $, %
    outputs some language $L_{\cA(\rS,\rr)}\in \cC $. For any $ n\in \mathbb{N} $, consider the realization of the input sample $S=(s_{0},\ldots,s_{0},s_{1},\ldots,s_{1}) $, where the first $ \lfloor \frac{n}{2} \rfloor $ examples are equal to $ s_{0} $ and the rest of the $\lceil \frac{n}{2} \rceil$ examples are equal to $ s_{1} $. Now, for any realization of the randomness $\rr$, the language $L_{\cA(s,\rr)}$ is not equal to at least one of $L$ or $L'$. Combined with the observations above, this implies that $\p_{\rx\sim\cD_{0}}[\rx\notin L_{\cA(S,\rr)}]+\p_{\rx\sim\cD_{1}}[\rx\notin L_{\cA(S,\rr)}] \geq \frac{3}{4} $.
    Using this, we have that for any $n \in \bN$,
    \begin{align*}
        &\max_{b\in\{0,1\}}\e_{\rS\sim \cD_{b}^{n}, \rr}\left[\p_{\rx\sim\cD_{b}}[\rx\notin L_{\cA(\rS,\rr)}]-\inf_{\tilde{L}\in \cC}\p_{\rx\sim\cD_{b}}[\rx\notin \tilde{L}]\right]
        \\
        &\geq
        \frac{1}{2}\left(
        \e_{\rS\sim \cD_{0}^{n}, \rr}\left[\p_{\rx\sim\cD_{0}}[\rx\notin L_{\cA(\rS,\rr)}]-\inf_{\tilde{L}\in \cC}\p_{\rx\sim\cD_{0}}[\rx\notin \tilde{L}]\right]
        +
        \e_{\rS\sim \cD_{1}^{n}, \rr}\left[\p_{\rx\sim\cD_{1}}[\rx\notin L_{\cA(\rS,\rr)}]-\inf_{\tilde{L}\in \cC}\p_{\rx\sim\cD_{1}}[\rx\notin \tilde{L}]\right]
        \right)
        \\
        &\geq
        \frac{1}{2}\left(
        \e_{\rr}\left[\p_{\rS\sim\cD_{0}^{n}}[\rS=S] \cdot \left(\p_{\rx\sim\cD_{0}}[\rx\notin L_{\cA(S,\rr)}]-\inf_{\tilde{L}\in \cC}\p_{\rx\sim\cD_{0}}[\rx\notin \tilde{L}]\right)\right.\right. \\
        &\quad+
        \left.\left.
        \p_{\rS\sim\cD_{1}^{n}}[\rS=S] \cdot \left(\p_{\rx\sim\cD_{1}}[\rx\notin L_{\cA(S,\rr)}]-\inf_{\tilde{L}\in \cC}\p_{\rx\sim\cD_{1}}[\rx\notin \tilde{L}]\right)\right]\right)
        \tag{Law of total expectation and independence of $ \rS,\rr $ }
        \\
        &\geq
        \frac{1}{2}\cdot \left(\frac{1}{4}\right)^{\lceil \frac{n}{2} \rceil}\left(\frac{3}{4}\right)^{\lfloor \frac{n}{2} \rfloor} \cdot
        \e_{\rr}\left[\p_{\rx\sim\cD_{0}}[\rx\notin L_{\cA(S,\rr)}]+\p_{\rx\sim\cD_{1}}[\rx\notin L_{\cA(S,\rr)}]-\inf_{\tilde{L}\in \cC}\p_{\rx\sim\cD_{0}}[\rx\notin \tilde{L}]-\inf_{\tilde{L}\in \cC}\p_{\rx\sim\cD_{1}}[\rx\notin \tilde{L}]\right]
        \tag{since $\p_{\rS\sim\cD_{0}^{n}}[\rS=S], \p_{\rS\sim\cD_{1}^{n}}[\rS=S]\geq (\frac{1}{4})^{\lceil \frac{n}{2} \rceil}(\frac{3}{4})^{\lfloor \frac{n}{2} \rfloor}$ }
        \\
        &\geq
        \frac{1}{2}\cdot\left(\frac{1}{4}\right)^{\lceil \frac{n}{2} \rceil}\left(\frac{3}{4}\right)^{\lfloor \frac{n}{2} \rfloor}\cdot \left(\frac{3}{4}-\frac{1}{2}\right)
        \tag{since $\inf_{\tilde{L}\in \cC}\p_{\rx\sim\cD_{0}}[\rx\notin \tilde{L}]=\inf_{\tilde{L}\in \cC}\p_{\rx\sim\cD_{1}}[\rx\notin \tilde{L}]=\frac{1}{4}$ and $ \p_{\rx\sim\cD_{0}}[\rx\notin L_{\cA(S,\rr)}]+\p_{\rx\sim\cD_{1}}[\rx\notin L_{\cA(S,\rr)}] \geq \frac{3}{4} $}
        \geq \left(\frac{1}{4}\right)^{n+2} \geq \exp(-5n).
    \end{align*}
    This implies the existence of some $ b\in\{0,1\} $, such that for infinitely many $n$,
    \begin{align*}
        \e_{\rS\sim \cD_{b}^{n}, \rr}\left[\p_{\rx\sim\cD_{b}}[\rx\notin L_{\cA(\rS,\rr)}]-\inf_{\tilde{L}\in \cC}\p_{\rx\sim\cD_{b}}[\rx\notin \tilde{L}]\right] \geq \exp(-5n).
    \end{align*}
\end{proof}

\subsection{Arbitrarily Slow Rate When $\inf$ Not Attained}
\label{sec:lower-bound-inf-not-in-collection}

Our next result shows that the assumption about $\inf_{L \in \cC}\p_{\rx \sim \cD}[\rx \notin L]$ being attained within $\cC$ is necessary not only for obtaining near-exponential rates, but in fact for preventing arbitrarily slow rates a priori.

\begin{restatable}[Arbitrarily Slow Rate When $\inf$ Not Attained]{theorem}{AgnosticIdentificationArbitrarilySlowRate}
    \label{thm:arbitrary-slow-rate-identification}
    There exists a countable collection $ \cC $ over a universe $U$ such that for any identification algorithm $ \cA $ using randomness $\rr$, and for any rate function $R:\bN \to (0,1)$ satisfying $\lim_{n \to \infty}R(n)=0$, there exists a distribution $\cD$ for which no $ L^\star\in \cC $ satisfies $\p_{\rx\sim\cD}[\rx\notin L^\star]= \inf_{L\in\cC}\p_{\rx\sim\cD}[\rx\notin L] $, and furthermore, it holds that
    \begin{align*}
        \iderr(\cA, \cD, \cC, n)\geq \frac{R(n)}{8}
    \end{align*}
    for infinitely many $ n $.
\end{restatable}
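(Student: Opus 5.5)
The plan is a diagonalization in the style of classical ``no free lunch / arbitrarily slow rates'' arguments. The collection is built so that the cost of a language depends on mass that is almost never observed in the sample.

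\textbf{Step 1: the collection.} I would take $U=\bN\times\{0,1\}$ and the countable collection $\cC=\{L_i\}_{i\in\bN}$ with $L_i=\{(j,0):j\le i\}\cup\{(j,1):j>i\}$. For any $\cD$, write $\cD_0,\cD_1$ for its restrictions to the two copies of $\bN$. Then
\begin{align*}
\p_{\rx\sim\cD}[\rx\notin L_i]=\cD_0(\{j>i\})+\cD_1(\{j\le i\}).
\end{align*}
If $\cD_1=0$ and $\cD_0$ has infinite support, this quantity strictly decreases along a subsequence and tends to $0$. So the infimum is $0$ and is not attained, and the excess error of $L_i$ is the tail mass $\cD_0(\{j>i\})$. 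The role of the $(\cdot,1)$ copy is to penalize jumping to indices that are too large. Without that copy, an algorithm could output $L_{G(n,\max \rS)}$ for a very fast-growing $G$, and the nested family alone would only force a rate of about $1/n$. I would therefore consider distributions that place a small amount of mass on second-copy points $(m,1)$ at large $m$. This makes overshooting costly, while the infimum still remains unattained.

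\textbf{Step 2: diagonalization.} Fix $\cA$ and $R$. I would choose a sparse sequence $n_1<n_2<\cdots$ and build $\cD$ atom by atom, each atom of mass $w_k$, with the $w_k$ chosen so that the tail after stage $k$ is of order $R(n_k)$. At stage $k$ the distribution $\cD^{(k)}$ is determined up to its unplaced tail. For this partial distribution, the law of $\cA(\rS,\rr)$ with $\rS\sim(\cD^{(k)})^{n_k}$ is a fixed distribution over indices, so it has some median $M_k$. I would then place the remaining atoms on the side that hurts $\cA$:
\begin{itemize}[leftmargin=*]
\item either on first-copy points $(m,0)$ with $m>M_k$, so that with probability at least $1/2$ the output misses a tail of mass about $R(n_k)$;
\item or on second-copy points $(m,1)$ with $m\le M_k$, so that the output pays for them.
\end{itemize}
The choice is made so that the expected excess at $n_k$ is at least $R(n_k)/4$. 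An extra factor of $1/2$ is lost in the next step, which gives the claimed $R(n)/8$.

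\textbf{Step 3 (main obstacle): consistency across stages.} Later atoms change the sample distribution at sample size $n_k$, so the behaviour of $\cA$ at stage $k$ is no longer exactly the behaviour that $M_k$ was computed for. Total variation coupling alone fails here, because $R(n_k)\gg 1/n_k$ is allowed, so later atoms do appear in the sample. My first attempt would be a symmetrization argument rather than a coupling. The idea is to place the later atoms at positions drawn from a large set of candidate locations, and to exploit that $\cA$ cannot tell from the labels which continuation it faces. Averaging over these continuations, as in the two-distribution argument in the proof of \Cref{thm:agnotic-identification-exponential-rate-optimal} but with many hypotheses, should show that the error at $n_k$ stays at least $R(n_k)/8$ for most continuations. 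A concrete choice of continuation then keeps all earlier stages valid.

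\textbf{Step 4: conclusion.} Finally, I would verify that the resulting $\cD$ has an infimum that is not attained, and that $\iderr(\cA,\cD,\cC,n_k)\ge R(n_k)/8$ holds for every $k$, which gives the statement for infinitely many $n$. Step 3 is where I expect the real work to be, and possibly a different choice of collection with more ``hidden direction'' structure than the one in Step 1.
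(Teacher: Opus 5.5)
Your Step~1 collection cannot witness the theorem, so no diagonalization built on it can succeed. The set system $\{L_i\}$ with $L_i=\{(j,0):j\le i\}\cup\{(j,1):j>i\}$ has VC dimension $1$. Two first-copy points are nested, two second-copy points are nested, and for a pair $(a,0),(b,1)$ one of the four patterns is always missing. By the VC uniform convergence inequality, for \emph{every} distribution $\cD$ on $U$ we have $\e_{\rS\sim\cD^n}\sup_i|\p_{\rx\sim\rS}[\rx\notin L_i]-\p_{\rx\sim\cD}[\rx\notin L_i]|\lesssim\sqrt{\log n/n}$. Empirical risk minimization over $\cC$ has excess error at most twice this; the empirical minimum is attained because empirical errors lie in $\{0,1/n,\dots,1\}$. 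So $\iderr\lesssim\sqrt{\log n/n}$ uniformly over all $\cD$, including those whose infimum is not attained, and $R(n)=n^{-1/4}$ already refutes the claim for this $\cC$.

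The second copy also does not give the overshooting penalty you want. When the infimum is not attained, it equals the limit $\cD_1(\bN)$ as $i\to\infty$, so the excess error of $L_i$ is $\cD_0(\{j>i\})-\cD_1(\{j>i\})$. Second-copy mass at $m\le M_k$ therefore adds nothing to the excess of outputs with index $\ge m$, since the benchmark pays for it too. It actually lowers the excess of outputs with index $<m$, so the second bullet of Step~2 is backwards. Together with Step~3 being left open, the proposal contains no working argument.

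The missing idea is a collection of infinite VC dimension in which every language must \emph{guess} labels the sample has not revealed, paired with a random target rather than an adaptively built one.
\begin{itemize}
\item The paper takes $U=\bW\times\{-1,0,1\}$ and, for each finite bit string $I$, a language $L_I$ containing $(0,0)$, $(j,I_j)$ for $j\le|I|$, and $(j,-1)$ for $j>|I|$.
\item The target $\cD_z$ draws $w$ from a marginal $\cD_\bW$ and outputs $(w,z_w)$. The infimum is automatically not attained: each $L_I$ misses the positive-mass tail beyond $|I|$, while these tail masses tend to $0$.
\item The marginal is fixed in advance, independently of $\cA$, via \Cref{lemma:sequence}. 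Block $i$ has $2k_i$ atoms of mass $p_{k_i}/(2k_i)$, with $p_{k_i}=C\cdot R(n_i)$ and $n_ip_{k_i}\le k_i$. Hence each atom in block $i$ is unseen among $n_i$ samples with probability at least $1/2$.
\item Draw $\rz$ uniformly. The label at an unseen $w$ is then a fair coin independent of the output, so any output misses $(w,\rz_w)$ with probability at least $1/2$. Averaged over $\rz$, the error at $n_i$ is at least $C\cdot R(n_i)/4$.
\item Reverse Fatou's lemma bounds $\e_{\rz}[\limsup_i(\cdot)]$ from below by $\limsup_i\e_{\rz}[(\cdot)]$, which yields a single $z$ that works for infinitely many $n$.
\end{itemize}
Because nothing is constructed stage by stage against $\cA$, the cross-stage consistency problem in your Step~3 never arises. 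Your symmetrization instinct is right, but it has to be built into the collection (free labels) and the random target, not layered onto a threshold family.
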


We first sketch the proof of the lower bound, before stating all the technical details. Consider a universe of strings consisting of pairs $(w,y)$, where $w \in \bW = \{0,1,2,\dots,\}$, and $y \in \{-1, 0,1\}$. We first specify a common base distribution $\cD_\bW$ over $\bW$ to draw the first component of a pair. Then, for any fixed bit string $z \in \{0\} \times \{0,1\}^\bN$, we consider the distribution $\cD_z$, which first samples $w$ according to $\cD_\bW$, and then outputs the pair $(w,z_w)$. Thus, $z$ can be thought of as specifying the unique $0/1$ labels associated with every $w \in \bW$ (with the label on $w=0$ fixed to be 0); a sample from $\cD_z$ is generated by first drawing $w$ from the marginal distribution $\cD_\bW$, and then ``labeling'' it by the $w^\text{th}$ bit of $z$.

Our language collection will have a language $L_I$ for every finite bit string $I \in \{0,1\}^{\ge 1}$. The language $L_I$ will contain $(0,0)$, together with all the pairs $(w, I_w)$ for $1 \le w \le |I|$. It will then be padded with pairs $(w,-1)$ for all $w > |I|$. Namely, each $L_I$ has ``label information'' for only a finite prefix of $\bW$. Then, notice that every $L_I$ necessarily misses out on the label information about an infinite tail of $\bW$; furthermore, as the prefixes $I$ get longer, the mass that any valid distribution $\cD_\bW$ assigns to these tails shrinks to 0. Together, we get that $\p_{\rx \sim \cD_z}[\rx \notin L_I] > \inf_{L \in \cC}\p_{\rx \sim \cD_z}[\rx \notin L]=0$ for every $L_I$. %

We will now consider choosing $\rz$ as a uniformly random bit string in $\{0\} \times \{0,1\}^\bN$. In this case, any finite sample of size $n$ reveals label information about only finitely many $w \in \bW$, and the labels at all other values of $w$ are uniformly random bits. Thus, any algorithm that guesses some language $L_I$ from the collection incurs constant error at all these unseen values of $w$. The main technical argument that remains is to appropriately specify the base distribution $\cD_\bW$: roughly, for the given rate function $R(n)$, we will specify $\cD_\bW$ to be such that it allocates $R(n)$ mass to appropriately-sized blocks of $w$ values. Indeed, the seminal work of \citet{bousquet2021theory} on universal rates in learning gives a construction of such a probability distribution in terms of a pre-specified rate function. %
Adopting their construction into our analysis, together with an instantiation of Fatou's lemma, suffices to complete the argument.

We now give the formal proof, for which we will require the following technical lemma from \cite{bousquet2021theory}:
\begin{lemma}[Lemma 5.12 in \cite{bousquet2021theory}]
    \label{lemma:sequence}
    Let $R:\bN \to (0,1)$ be any rate function satisfying $\lim_{n \to \infty}R(n)=0$. There exists a probability distribution $p$ over $\bN$, two increasing sequences of natural numbers $(n_i)_{i \in \bN}$ and $(k_i)_{i \in \bN}$, and a constant $\frac12 \le C \le 1$ such that the following hold for all $i \ge 1$:
    \begin{enumerate}
        \item $\sum_{k > k_i}p_k \le \frac{1}{n_i}$.
        \item $n_ip_{k_i} \le k_i$.
        \item $p_{k_i}=C\cdot R(n_i) > 0$. %
    \end{enumerate}
\end{lemma}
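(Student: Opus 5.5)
This is Lemma 5.12 of \cite{bousquet2021theory}, but it can be proved directly. The plan is to build $p$ with support only on a fast-growing sequence $(k_i)$ plus one extra atom that absorbs the leftover mass. We fix $C=\tfrac12$ and set $p_{k_i}=\tfrac12 R(n_i)$. The sequence $(n_i)$ is chosen so that the tail masses $\sum_{j>i}p_{k_j}$ are controlled by $1/n_i$. Condition~2 is then met trivially by taking $k_i$ at least $n_i$.

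The construction is inductive. Let $n_0:=1$. Given $n_{i-1}$, use $\lim_{n\to\infty}R(n)=0$ to pick $n_i>n_{i-1}$ with
\begin{align*}
R(n_i)\le \frac{2^{-i}}{n_{i-1}}.
\end{align*}
Then set $k_i:=\max(k_{i-1}+1,\,n_i,\,2)$, with $k_0:=1$, so both sequences are strictly increasing and $k_i\ge 2$. Define $p_{k_i}:=\tfrac12 R(n_i)>0$ for $i\ge1$, and $p_1:=1-\tfrac12\sum_{i\ge1}R(n_i)$, with $p_k:=0$ elsewhere. Since $n_{i-1}\ge1$, we have $\sum_i R(n_i)\le\sum_i 2^{-i}\le 1$, so $p_1\ge\tfrac12$ and $p$ is a probability distribution on $\bN$. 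Condition~3 holds by construction with $C=\tfrac12\in[\tfrac12,1]$.

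For condition~1, the atom at $1$ lies below every $k_i$, so only atoms $k_j$ with $j>i$ contribute:
\begin{align*}
\sum_{k>k_i}p_k=\tfrac12\sum_{j>i}R(n_j)\le \tfrac12\sum_{j>i}\frac{2^{-j}}{n_{j-1}}\le \frac{1}{2n_i}\sum_{j>i}2^{-j}\le\frac{1}{n_i}.
\end{align*}
The middle inequality uses that $n_{j-1}\ge n_i$ for $j>i$, by monotonicity. For condition~2, $n_ip_{k_i}\le n_i\le k_i$ because $p_{k_i}\le 1$.

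The only step needing care is condition~1. The choice of $n_i$ must anticipate the tail: the requirement on $R(n_i)$ is stated in terms of the previous index $n_{i-1}$, so that every later term is dominated by $1/n_i$ times a summable geometric factor. Once that inductive choice is fixed, the remaining verifications are routine.
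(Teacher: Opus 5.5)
Your proof is correct. The inductive choice $R(n_i)\le 2^{-i}/n_{i-1}$ does two jobs at once:
\begin{itemize}
\item It gives $\sum_i R(n_i)\le 1$, so $p_1\ge \tfrac12$ and $p$ is a genuine probability distribution.
\item It gives the tail bound $\tfrac12\sum_{j>i}R(n_j)\le 2^{-i-1}/n_i\le 1/n_i$, because $n_{j-1}\ge n_i$ for $j>i$.
\end{itemize}
The atom at $1$ never enters a tail sum since every $k_i\ge 2$. Condition~2 is immediate from $k_i\ge n_i$ and $p_{k_i}<1$, and condition~3 holds with $C=\tfrac12$ because $R$ takes values in $(0,1)$.

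The paper does not prove this lemma at all; it simply imports it from \cite{bousquet2021theory}. So your argument is a self-contained replacement for the citation rather than a reproduction of an in-paper proof.

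What your route buys is transparency and brevity. Parking the leftover mass on a dummy atom lets you fix $C=\tfrac12$ outright instead of normalizing. Forcing $k_i\ge n_i$ makes condition~2 free, which is allowed because the lemma only upper-bounds $n_ip_{k_i}/k_i$. What the citation buys is simply not having to verify anything, plus the extra flexibility $C\in[\tfrac12,1]$, which the paper never needs.

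Your simplifications are harmless for the one place the lemma is used, the proof of \Cref{thm:arbitrary-slow-rate-identification}. There $\cD_{\bW}$ is built only from the masses $p_{k_i}$, with the remainder placed on $w=0$, and the argument invokes only properties 1--3. So neither the extra atom at $1$ nor the very long blocks of length $2k_i$ affect anything downstream.
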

We are now ready to prove \Cref{thm:arbitrary-slow-rate-identification}.

\begin{proof}[Proof of \Cref{thm:arbitrary-slow-rate-identification}]
    The universe $U$ for our collection $\cC$ will be $U= \bW \times \{-1,0,1\}$. For any $i \in \bN$ and any $I \in \{0,1\}^i$, consider the language
    \begin{align*}
         L_{I}=\{(0,0)\}\bigcup\left(\bigcup_{j=1}^{i} \left\{  (j,I_{j}) \right\}\right)\bigcup\left(\bigcup_{j>i}\left\{  (j,-1)\right\} \right).
    \end{align*}
    Now define the collection $\cC$ as
    \begin{align*}
         \cC=\bigcup_{i\in\mathbb{N}}\left(\bigcup_{I\in\{0,1\}^{i}}L_{I}\right).
    \end{align*}
    We first observe that the collection $ \cC $ is countable, as it is a countable union of finitely many languages.

    Now, given $ R $, consider the probability distribution $p$ and sequences $(n_i)_{i \in \bN}$ and $(k_i)_{i \in \bN}$ guaranteed by \Cref{lemma:sequence}. We will now define a base distribution $\cD_\bW$ over $\bW$. For this, define $\sigma_0=0$, and for $i \ge 1$, define $\sigma_i=\sum_{j=1}^i k_j$. Now define $\cD_\bW$ as follows, where for every $w \in \bN$,
    \begin{align*}
        \p_{\rw\sim \cD_{\mathbb{W}}}[\rw=w] = \frac{p_{k_i}}{2k_i}, \quad \text{ where $i$ is such that } 2\sigma_{i-1}+1 \le w \le 2\sigma_i,
    \end{align*}
    and
    \begin{align*}
        \p_{\rw\sim \cD_{\mathbb{W}}}[\rw=0] = 1-\sum_{i=1}^{\infty}p_{k_i}.
    \end{align*}
    In words, $\cD_\bW$ assigns mass $p_{k_1}/2k_1$ each to the first $2k_1$ natural numbers, $p_{k_2}/2k_2$ mass each to the next $2k_2$ natural numbers, and so on, and the remaining mass on $0$. %
    Now, for any sequence $z\in\{0\} \times \{0,1\}^\bN$, define the distribution $ \cD_{z} $ over the universe $U$ as
    \begin{align*}
        \p_{\rx\sim \cD_{z}}[\rx=(w,y)]=\begin{cases}
        \p_{\rw\sim \cD_{\bW}}[\rw=w] & \text{ if } y=z_{w}, \\
        0 & \text{ otherwise.}
        \end{cases}
    \end{align*}
    In other words, to generate a sample $\rx$ from $ \cD_{z} $, we first draw $ \rw \sim  \cD_\bW$, and then output the string $\rx=(\rw,z_{\rw})$. This alternative view of sampling from $ \cD_{z} $ will be useful going forward. In particular, a sample $\rS \sim \cD_z^n$ can instead be viewed as the draw $\rS=(\rW, z_\rW)$, where $\rW \sim \cD_\bW^n$ and $z_\rW$ corresponds to the bits in $z$ at indices in $\rW$.

    Now, we observe that for any $z$, $\inf_{L \in \cC} \p_{\rx \sim \cD_z}[\rx \notin L]=0$, but no $L^\star \in \cC$ satisfies $\p_{\rx \sim \cD_z}[\rx \notin L^\star]=0$. To see this, fix any $z$. Since every $L_I \in \cC$ contains the element $(0,0)$ and $z_0=0$ by construction, %
    the error is only determined by the bits $z_w$ where $w > 0$. Then, for any $L_I \in \cC$, note that $L_I$ only contains strings $(j,-1)$ for $j > |I|$, and so, $\p_{\rx \sim \cD_z}[\rx \notin L_I] \ge \sum_{j > |I|}\p_{\rw\sim \cD_{\mathbb{W}}}[\rw=j] > 0$. But now, for any $i$, consider $L_I$, where $I=z_{1:2\sigma_i}$. Then, $\p_{\rx \sim \cD_z}[\rx \notin L_I] = \sum_{j > i} p_{k_j} \le \sum_{k > k_i} p_{k} \le 1/n_i$. Thus, $\inf_{L \in \cC} \p_{\rx \sim \cD_z}[\rx \notin L]=0$, but the infinimum is not achieved by any language in the collection.

    Now, let $\cA$ be any (possibly randomized) identification algorithm using randomness $\rr$, which, upon taking as input a sample $ \rS $ of size $ n $ %
    outputs some language $L_{\cA(\rS, \rr)}\in \cC $. Consider drawing a uniformly random sequence $ \rz\in\{0\}\times\{0,1\}^{\mathbb{N}} $; it then holds that

    \begingroup
    \allowdisplaybreaks
    \begin{align*}
        &\e_{\rz}\left[\limsup_{n\rightarrow\infty} \frac{ 1}{C\cdot R(n)}\e_{\rS\sim\cD_{\rz}^{n}, \rr}\left[\p_{\rx\sim\cD_{\rz}}[\rx\notin L_{\cA(\rS, \rr)}]\right]\right]
        \ge \e_{\rz}\left[\limsup_{i\rightarrow\infty} \frac{ 1}{C\cdot R(n_i)}\e_{\rS\sim\cD_{\rz}^{n_i}, \rr}\left[\p_{\rx\sim\cD_{\rz}}[\rx\notin L_{\cA(\rS, \rr)}]\right]\right] \\
        =& \e_{\rz}\left[\limsup_{i\rightarrow\infty}\frac{ 1}{C\cdot R(n_i)} \e_{\rW\sim\cD_{\bW}^{n_i}, \rr}\left[\p_{\rw\sim\cD_{\bW}}[(\rw,\rz_{\rw})\notin L_{\cA(\rW,\rz_\rW, \rr)}]\right]\right] \tag{alternative view of sampling}\\
        =& \e_{\rz}\left[\limsup_{i\rightarrow\infty}\frac{ 1}{C\cdot R(n_i)} \e_{\rw\sim\cD_{\bW}}\left[\p_{\rW\sim\cD_{\bW}^{n_i}, \rr}[(\rw,\rz_{\rw})\notin L_{\cA(\rW,\rz_\rW, \rr)}]\right]\right] \tag{$\rw$, $\rr$, $\rW$ all independent}\\
        \ge& \e_{\rz}\left[\limsup_{i\rightarrow\infty}\frac{ 1}{C\cdot R(n_i)} \cdot \frac{p_{k_i}}{2k_i}\sum_{w=2\sigma_{i-1}+1}^{2\sigma_i}\p_{\rW\sim\cD_{\bW}^{n_i}, \rr}[(w,\rz_{w})\notin L_{\cA(\rW,\rz_\rW, \rr)}]\right] \\
        \ge& \limsup_{i\rightarrow\infty}\e_{\rz}\left[\frac{ 1}{C\cdot R(n_i)} \cdot \frac{p_{k_i}}{2k_i}\sum_{w=2\sigma_{i-1}+1}^{2\sigma_i}\p_{\rW\sim\cD_{\bW}^{n_i}, \rr}[(w,\rz_{w})\notin L_{\cA(\rW,\rz_\rW, \rr)}]\right] \tag{Reverse Fatou's Lemma}\\
        =& \limsup_{i\rightarrow\infty}\frac{ 1}{2k_i}\sum_{w=2\sigma_{i-1}+1}^{2\sigma_i} \e_{\rW\sim\cD_{\bW}^{n_i}, \rr}\left[\p_{\rz}[(w,\rz_{w})\notin L_{\cA(\rW, \rz_\rW, \rr)}]\right] \tag{since $p_{k_i}=C\cdot R(n_i)$, and $\rW, \rz, \rr$ independent}\\
        \ge&  \limsup_{i\rightarrow\infty}\frac{ 1}{2k_i}\sum_{w=2\sigma_{i-1}+1}^{2\sigma_i}\e_{\rr}\left[ \p_{\rW \sim \cD_\bW^{n_i}}[w \notin \rW] \cdot \e_{\rW\sim\cD_{\bW}^{n_i}}\left[\p_{\rz}[(w,\rz_{w})\notin L_{\cA(\rW, \rz_\rW, \rr)}] ~\middle|~ w \notin \rW\right]\right] \\
        \ge& \limsup_{i\rightarrow\infty}\frac{ 1}{2k_i}\sum_{w=2\sigma_{i-1}+1}^{2\sigma_i}\e_{\rr}\left[ \p_{\rW \sim \cD_\bW^{n_i}}[w \notin \rW] \cdot \frac12\right] \tag{$\star$}\\
        =& \limsup_{i\rightarrow\infty}\frac{ 1}{2k_i}\sum_{w=2\sigma_{i-1}+1}^{2\sigma_i}\left[ \left(1-\frac{p_{k_i}}{2k_i}\right)^{n_i} \cdot \frac12\right]
        = \limsup_{i\rightarrow\infty}\frac{ 1}{4k_i}\cdot 2k_i \cdot  \left(1-\frac{p_{k_i}}{2k_i}\right)^{n_i}  \\
        \ge& \limsup_{i\rightarrow\infty} \frac{1}{2}\cdot \left(1-\frac{n_ip_{k_i}}{2k_i}\right) \tag{Bernoulli's inequality}\\
        \ge& \frac{1}{4}. \tag{since $n_ip_{k_i} \le k_i$}
    \end{align*}
    \endgroup

    In the above, we could apply Reverse Fatou's Lemma since
    \begin{align*}
        \frac{ 1}{C\cdot R(n_i)} \cdot \frac{p_{k_i}}{2k_i}\sum_{w=2\sigma_{i-1}+1}^{2\sigma_i}\p_{\rW\sim\cD_{\bW}^{n_i}, \rr}[(w,\rz_{w})\notin L_{\cA(\rW,\rz_\rW, \rr)}]
        \le \frac{1}{2k_i} \cdot 2k_i \le 1.  \tag{using $p_{k_i}=C\cdot R(n_i)$}
    \end{align*}
    Furthermore, at step $(\star)$, we used that, once $\rr$ and $\rW$ have been realized, $\rz_\rW$ is independent of $\rz_w$, conditioned on $w \notin \rW$, and hence $(w,\rz_w)$ for $ w\not=0 $  is not in $L_{\cA(\rW,\rz_\rW, \rr)}$ with probability at least $1/2$.

    The above analysis implies that there exists $z \in \{0\}\times\{0,1\}^\bN$ such that for infinitely many $n$,
    \begin{align*}
        \e_{\rS\sim\cD_{z}^{n}}\left[\p_{\rx\sim\cD_{z}, \rr}[\rx\notin L_{\cA(\rS, \rr)}]\right] = \e_{\rS\sim\cD_{z}^{n}}\left[\p_{\rx\sim\cD_{z}, \rr}[\rx\notin L_{\cA(\rS, \rr)}]-\inf_{L\in\cC}\p_{\rx\sim\cD_z}[\rx\notin L]\right] \ge \frac{C \cdot R(n)}{4} \ge \frac{R(n)}{8}. \tag{since $C \ge 1/2$}
    \end{align*}
\end{proof}

\section{Agnostic Language Generation}
\label{sec:agnostic-generation}

We will now study agnostic generation; that is, given a sample $ \rS\sim \cD^{n} $ for some distribution $ \cD $ over the universe $ U $, we will aim to construct an algorithm $\cA$, possibly using randomness $\rr$, that generates a string $\cA(\rS, \rr) \in U$ from the unseen support of $ \cD $ with high probability. Namely, the algorithm aims to minimize:
\begin{align*}
    &\generr(\cA, \cD, \cC, n) := \e_{\rS \sim \cD^n\hspace{-0.2em},\hspace{0.1em} \rr}\left[\ind\{\cA(\rS,\rr) \notin \supp(\cD) \setminus \rS\}\right] = \p_{\rS\sim \cD^n\hspace{-0.2em},\hspace{0.1em} \rr}[\cA(\rS, \rr) \not\in \supp(\cD)\setminus \rS].
\end{align*}

Note that for the objective above to remain achievable as $n$ gets large, one must assume that $\supp(\cD)=\infty$.

\subsection{General Lower Bound}
\label{sec:agnostic-generation-lower-bound}

We begin with a result that has a ``no free lunch'' flavor; namely, it shows that the objective in \eqref{eqn:agnostic-generation-benchmark} is intractable in general without further assumptions on the distribution $ \cD $.

\begin{theorem}[Agnostic Generation Lower Bound]
    \label{thm:agnostic-generation-lower-bound}
    Fix any $ 0<\eps <1$. There exists a countable universe $U$, such that for any generation algorithm $ \cA $ using randomness $\rr$, there exists a distribution $ \cD $ over $U$ such that
    \begin{align*}
        \limsup_{n} %
        \generr(\cA, \cD, \cC, n) \geq 1-\eps.
    \end{align*}
\end{theorem}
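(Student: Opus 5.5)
We will use a probabilistic construction in which the support of $\cD$ is a random infinite set. Nothing the algorithm sees pins down which unseen strings belong to the support, so any string it generates off the sample lands outside the support with probability close to one. Take $U=\bN$ (any countably infinite universe). Fix a base distribution $\mu$ on $\bN$ with full support, say $\mu(k)\propto 2^{-k}$ or a heavier tail to be tuned. For a uniformly random subset $\rT\subseteq\bN$, obtained by including each $k$ independently with probability $q$, let $\cD_{\rT}$ be $\mu$ conditioned on $\rT$. Since the collection plays no role in $\generr$ here, we may ignore $\cC$. The goal is to show
\[
\e_{\rT}\Bigl[\limsup_n \generr(\cA,\cD_{\rT},\cC,n)\Bigr]\ge 1-\eps ,
\]
which yields a fixed $T$ with the required property.

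\textbf{Step 1 (Fatou).} We will use reverse Fatou's lemma, exactly as in the proof of \Cref{thm:arbitrary-slow-rate-identification}, since the integrands are bounded by $1$. This gives
\[
\e_{\rT}\Bigl[\limsup_n \generr\Bigr]\ge \limsup_n \e_{\rT}\bigl[\generr(\cA,\cD_{\rT},\cC,n)\bigr].
\]
It therefore suffices to show that, for every $n$ (or infinitely many $n$), the averaged error is at least $1-\eps$.

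\textbf{Step 2 (conditioning).} Fix $n$, the sample $\rS$ and the randomness $\rr$, and suppose the output $\cA(\rS,\rr)=u\notin\rS$; otherwise the algorithm errs automatically. We need the posterior probability that $u\in\rT$, given $\rS$, to be small. Membership in $\rT$ of elements not in $\rS$ is not independent of $\rS$, because the normalization $\mu(\rT)$ couples them, so independence cannot be invoked directly.

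\textbf{Step 3 (the main obstacle).} The obstacle is choosing a prior on $\rT$ for which the posterior inclusion probability of every unseen $u$ is at most $\eps$. A simple i.i.d. inclusion prior is problematic, because the normalization weights in the likelihood tilt the posterior in complicated ways. My first attempt is a cleaner prior where the normalization is deterministic. Partition $\bN$ into consecutive blocks $B_1,B_2,\dots$ with $|B_j|=m\ge 1/\eps$. The random support $\rT$ picks exactly one uniformly random element $\rt_j\in B_j$ from each block, and $\cD_{\rT}$ puts mass $\lambda_j$ on $\rt_j$, with $\sum_j\lambda_j=1$ fixed.

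The likelihood of a sample then depends only on which elements appear, and not on the unseen ones. Hence, conditional on $\rS$:
\begin{itemize}
\item the $\rt_j$ for blocks hit by $\rS$ are revealed, and those blocks contain no other support elements;
\item each unhit block's $\rt_j$ remains uniform on $B_j$, independently of everything else.
\end{itemize}
So any unseen $u$ lies in the support with posterior probability either $0$ (its block was hit) or $1/m\le\eps$ (its block was not hit). Taking expectations over $\rS$ and $\rr$ gives $\e_{\rT}[\generr]\ge 1-\eps$ for every $n$.

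\textbf{Step 4 (conclusion).} Step 1 then yields a fixed $T$ (one choice per block) with $\limsup_n\generr(\cA,\cD_T,\cC,n)\ge 1-\eps$. In fact the construction gives the bound for every $n$ on average, which is stronger than needed. Finally, $\supp(\cD_T)$ is infinite provided all $\lambda_j>0$, so the instance is a legitimate one.
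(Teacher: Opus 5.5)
Your final construction (one uniformly random element chosen from each block of size $m\ge 1/\eps$, with fixed positive masses $\lambda_j$) is the paper's construction in different notation: the paper takes $U=\bN\times\{1,\dots,\lceil 1/\eps\rceil\}$ and puts mass $2^{-w}$ on $(w,\rz_w)$ for a uniformly random label sequence $\rz$, so your blocks are its first coordinates and your chosen element is its label. The argument is also the same: reverse Fatou, then noting that an output in a hit block is an error with certainty, while an output in an unhit block (whose chosen element is still uniform) is an error with probability at least $1-1/m\ge 1-\eps$; so the proposal is correct, and the i.i.d.-inclusion prior you considered first can simply be dropped.
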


Before proceeding to the proof of the theorem, we note that the lower bound above applies to \textit{every} algorithm, regardless of any collection of languages that the algorithm may have access to. In particular, we may assume that the algorithm always has access to the trivial singleton collection containing the language $L=U$. In this case, the algorithm, in principle, always has access to a language $L$ satisfying $|L \cap \supp(\cD)|=\infty$, but this is not helpful in any meaningful way. This highlights further the issues in introducing a reference language collection $\cC$ into the objective \eqref{eqn:agnostic-generation-benchmark}, as also referenced in the discussion in \cref{sec:intro-agnostic-generation}. Regardless, as we shall see later, it \textit{is} possible for an algorithm to exploit access to a collection $\cC$ in order to succeed at the objective in \eqref{eqn:agnostic-generation-benchmark} with an \textit{exponential} rate, when the underlying distribution $\cD$ is suitably ``well-behaved'' with respect to the given collection $\cC$.
\begin{proof}
    Define the universe $U$ to be the set $\bN \times \{1,2,\dots,\lceil 1/\eps\rceil\}$. For any sequence $ z $  in $ \{  1,\ldots, \lceil 1/\eps \rceil\}^{\mathbb{N}} $, define a distribution $ \cD_{z} $ over the universe as
    \begin{align*}
     \p_{\rx\sim \cD_{z}}[\rx=(w,y)]=\begin{cases}
        2^{-w} & \text{if } y=z_{w}, \\
        0 & \text{otherwise.}
     \end{cases}
    \end{align*}
    We may alternatively view the process of sampling from $\cD_z$ as first drawing $ \rw $ from a base distribution $\cD_\bN$ over $ \mathbb{N} $ defined as $ \p_{\rw\sim \cD_{\mathbb{N}}}[\rw=w]=2^{-w} $, and then outputting the string $ (\rw,z_{\rw}) $.
    Viewed this way, a sample $\rS \sim \cD_z^n$ can instead be written as $\rS=(\rW, z_\rW)$, where $\rW \sim \cD_\bN^n$ and $z_\rW$ corresponds to the bits in $z$ at indices in $\rW$.

    Now, let $\cA$ be any generation algorithm using randomness $\rr$, which, upon taking as input a sample $ \rS $ of size $ n $ outputs a string $\cA(\rS, \rr) \in U$. For a uniformly random $ \rz\in\{1,2,\dots,\lceil 1/\eps\rceil\}^{\mathbb{N}} $, it holds that
    \begingroup
    \allowdisplaybreaks
    \begin{align*}%
     &\e_{\rz}\left[\limsup_{n} \p_{\rS\sim \cD_{\rz}^n\hspace{-0.2em},\hspace{0.1em} \rr}[\cA(\rS, \rr) \not\in \supp(\cD_{\rz})\setminus \rS]\right]
     \geq \limsup_{n} \e_{\rz}\left[\p_{\rS\sim \cD_{\rz}^n\hspace{-0.2em},\hspace{0.1em} \rr}[\cA(\rS, \rr) \not\in \supp(\cD_{\rz})\setminus\rS]\right] \tag{Reverse Fatou's Lemma}\\
     =&\limsup_{n} \e_{\rz}\left[\p_{\rW\sim \cD_{\mathbb{N}}^n\hspace{-0.2em},\hspace{0.1em} \rr}[\cA(\rW, z_{\rW}, \rr) \not\in \supp(\cD_{\rz})\setminus(\rW, z_{\rW}))\right] \tag{alternative sampling view}\\
     =&\limsup_{n} \e_{\rW\sim \cD_{\mathbb{N}}^n\hspace{-0.2em},\hspace{0.1em} \rr}\left[\p_{\rz}\left[\cA(\rW, z_{\rW}, \rr) \not\in \supp(\cD_{\rz})\setminus (\rW, z_{\rW})\right]\right] \tag{$\rr$ independent of input and $\rz$}\\
     =&\limsup_{n}\e_{\rW\sim \cD_{\mathbb{N}}^n\hspace{-0.2em},\hspace{0.1em} \rr}\left[\e_{\rz_{\rW}}\left[\p_{\rz_{\bN \setminus \rW}}\left[\cA(\rW, z_{\rW}, \rr) \not\in \supp(\cD_{\rz})\setminus(\rW, z_{\rW})\right]\right]\right].
    \end{align*}
    \endgroup
    In the first step above, Reverse Fatou's Lemma applies since each $\p_{\rS\sim \cD_{\rz}^n\hspace{-0.2em},\hspace{0.1em} \rr}[\cA(\rS, \rr) \not\in \supp(\cD_{\rz})\setminus\rS]$ is at most 1. In the last line, we separated the sampling of $\rz$ into two parts: $\rz_\rW$ (which corresponds to sampling $\rz$ at indices in $\rW$) and $\rz_{\bN \setminus \rW}$ (which corresponds to sampling $\rz$ at all remaining indices); both of these parts are independent of each other. Now note that once $\rW, \rr$ and $\rz_\rW$ are realized, $\cA(\rW, z_{\rW}, \rr)$ is fixed to be some $(w',y') \in U$. If $w' \in \rW$, then $\p_{\rz_{\bN \setminus \rW}}[\cA(\rW, z_{\rW}, \rr) \not\in \supp(\cD_{\rz})\setminus(\rW, z_{\rW})]=1$. Otherwise, $w' \in \bN \setminus \rW$; in this case, $\p_{\rz_{\bN \setminus \rW}}[\cA(\rW, z_{\rW}, \rr) \not\in \supp(\cD_{\rz})\setminus(\rW, z_{\rW})] = 1-1/\lceil 1/\eps \rceil \ge 1-\eps$, since this is the chance that the value of $\rz$ at index $w'$ is realized to not be $y'$. Thus, we have that $\p_{\rz_{\bN \setminus \rW}}[\cA(\rW, z_{\rW}, \rr) \not\in \supp(\cD_{\rz})\setminus(\rW, z_{\rW})] \ge 1-\eps$ regardless, which implies that the last quantity in the display above is at least $1-\eps$. But this implies the existence of a $z \in \{1,2,\dots,\lceil 1/\eps\rceil\}^{\mathbb{N}} $ for which
    \begin{align*}
        \limsup_{n} \p_{\rS\sim \cD_{z}^n\hspace{-0.2em},\hspace{0.1em} \rr}[\cA(\rS, \rr) \not\in \supp(\cD_{z})\setminus \rS] \ge 1-\eps,
    \end{align*}
    which completes the proof.
\end{proof}

\subsection{Exponential Rate for Well-Behaved Distributions}
\label{sec:agnostic-generation-exponential-rate}

With the general negative result above in mind, we now introduce a condition on the collection $ \cC$ and the distribution $ \cD $ that will allow us to construct a generation algorithm. At a high level, this condition requires for there to be a bounded interval within the universe where every language in the collection not contained within the support of $\cD$ has a \textit{witness} string.\footnote{We assume that every language in the collection is infinite.} For the purposes of designing an algorithm, this property allows us to eventually rule every such language, since its witness within the bounded interval will never show up in the input.

More formally, recall that we assume the universe of strings $U$ to be a countable set; that is, we can enumerate all the strings in $U$ as $ U=\{u_{1},u_{2},\ldots\} $. %
Then, for any distribution $ \cD $ over $ U $ and any countable collection of languages $ \cC=\left\{  L_{1}, L_{2}\ldots\right\}  $, where each $L_i \subseteq U $, we define the quantity $i(\cC,\cD)$ as
\begin{align}
    \label{eqn:i-C-D}
        i(\cC,\cD) &:= \min\left\{i \in \bN ~\middle|~ \forall L \in \cC \text{ s.t. } L \nsubseteq \supp(\cD): \exists
        k \le i \text{ s.t. } u_k \in L
        \text{ but } u_k \notin \supp(\cD)\right\}.
\end{align}
Here, we define the $\min$ over an empty set to be $\infty$. In words, $i(\cC,\cD)$ is the smallest index in the enumeration of $ U $  such that for any language not fully contained in the support of $ \cD $, there exists a string in the language with index at most $ i(\cC,\cD) $ in the enumeration of $ U $  that is \textit{not} in the support of $ \cD $.
For instance, notice that such a finite index always exists for any finite collection of languages $ \cC$.

We now show that if $i(\cC, \cD)$ is finite, and there exists at least one language in the collection that is fully contained in the support of $ \cD $, then one can construct a generation algorithm, which with high probability, generates a string from the unseen support of $\cD$. %

\begin{theorem}[Agnostic Generation Exponential Rate]
    \label{thm:agnostic-generation-exponential-rate}
    Let $ \cC$ be a countable collection and $\cD$ be a distribution over the universe $U$. If $ i(\cC,\cD) < \infty $, and there exists a language $ L\in \cC$ such that $ L\subseteq \supp(\cD) $, %
    then with probability at least $1-c\cdot\exp(-C\cdot n)$ over $ \rS\sim \cD^{n} $ (where $c, C$ are constants depending only on $U, \cD $ and $ \cC$, %
    the string $\cA(\rS)$ returned by \Cref{alg:generation} satisfies $\cA(\rS) \in \supp(\cD) \setminus \rS$. Namely,
    $
       \generr(\cA, \cD, \cC, n)\leq c\cdot\exp(-C\cdot n).
    $
\end{theorem}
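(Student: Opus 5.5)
Algorithm~\ref{alg:generation} is not shown yet, so I will design the natural algorithm and analyze it. Given $\rS$ of size $n$, let $\rS_{\le m}$ denote the strings of the sample among $u_1,\dots,u_m$, and pick a window $m=i(\cC,\cD)$; since $i(\cC,\cD)$ is unknown, the algorithm will instead use a growing window $g(n)\to\infty$ over both the universe and the collection. It declares a language $L_j$ with $j\le g(n)$ \emph{consistent} if every $u_k\in L_j$ with $k\le g(n)$ appears in $\rS$. It outputs the smallest-indexed consistent language, and then outputs a string of that language not in $\rS$. Such a string exists because languages are infinite, and the algorithm chooses the first one in the enumeration of $U$. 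If no language is consistent, it outputs an arbitrary string.

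The analysis has three steps. First, fix the constants. Let $j^\star$ be the smallest index with $L_{j^\star}\subseteq\supp(\cD)$, and let $i_0=i(\cC,\cD)<\infty$. Then every $L_j$ with $j<j^\star$ satisfies $L_j\nsubseteq\supp(\cD)$, so it has a witness $u_k$ with $k\le i_0$, $u_k\in L_j$ and $u_k\notin\supp(\cD)$. Such a witness never appears in $\rS$. Hence, once $g(n)\ge i_0$, every such $L_j$ is deterministically inconsistent. This is the key role of $i(\cC,\cD)$: it makes elimination of bad languages certain rather than probabilistic, and no empirical process over infinitely many languages is needed.

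Second, show that $L_{j^\star}$ is consistent with exponentially high probability. The obstacle is that the window $g(n)$ grows, so consistency requires every string of $L_{j^\star}\cap\{u_1,\dots,u_{g(n)}\}$ to be seen, and these strings may carry tiny mass. To avoid this, I would modify the consistency test to check only the strings $u_k$ with $k\le\min(g(n),\text{fixed }i_0)$. Since $i_0$ is unknown, I would instead restrict the check to strings that are \emph{frequent enough}, or use a slowly growing window, with $g(n)\ge j^\star$ for large $n$. The cleanest choice is to check the universe only up to a window $h(n)$ and rely on the fact that for large $n$ the relevant strings are just $u_1,\dots,u_{i_0}$. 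With the window chosen so that it equals a fixed finite set eventually, the strings of $L_{j^\star}$ with index $\le i_0$ lie in $\supp(\cD)$ and so have mass at least some $p_{\min}>0$. By a union bound, they all appear in $\rS$ with probability at least $1-i_0(1-p_{\min})^n$. This exponential bound is the intended form $c\,e^{-Cn}$.

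Third, show that outputting from the selected language is correct. The selected language is $L_{j^\star}$, since all smaller indices are eliminated. It is contained in $\supp(\cD)$, so any unseen string from it lies in $\supp(\cD)\setminus\rS$. I expect the main obstacle to be the window tension of the second step: the window must be large enough to contain the witnesses, up to $i_0$, yet not force observation of arbitrarily light strings of $L_{j^\star}$. If the algorithm's windows are indeed as in the paper, I would handle this by letting the universe window grow only slowly, so that failure of consistency for $L_{j^\star}$ costs at most $g(n)(1-p_{\min}(n))^n$. I would then absorb the result into constants for $n$ large, accepting a possibly slightly weaker constant $C$.
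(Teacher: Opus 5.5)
Your step one is the paper's key observation, and it holds for \emph{every} language $L_j\nsubseteq\supp(\cD)$, not only those with $j<j^\star$: each has a witness among $u_1,\dots,u_{i(\cC,\cD)}$ that never appears in $\rS$. The gap is in step two, and it is not a matter of a weaker constant. Your consistency test requires every string of $L_{j^\star}$ with index at most $g(n)$ to appear in $\rS$. Since $L_{j^\star}$ is infinite and its masses sum to at most $1$, $p_{\min}(n)\to 0$. So the probability that $L_{j^\star}$ fails your test is at least $(1-p_{\min}(n))^n=\exp(-o(n))$, which is never of the form $c\,e^{-Cn}$.

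It can be worse. The schedule $g$ is fixed before $\cD$, so $\cD$ can put fast-decaying mass on $L_{j^\star}$ with $n\,p_{\min}(n)\to 0$. Then $L_{j^\star}$ fails your test with probability tending to $1$, and, for instance with $\cC=\{L_{j^\star}\}$, you are left with your ``arbitrary'' fallback string. Your other repairs do not close this. A window that ``eventually equals a fixed finite set'' has to know $i(\cC,\cD)$, which depends on $\cD$. ``Frequent enough'' strings are never turned into an actual test. Also, the theorem is a claim about \Cref{alg:generation}, not about an algorithm of your own design.

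The missing idea is to let the sample decide how deep to look, rather than fixing any window. \Cref{alg:generation} computes, for each $L_i$, the index $r_i$ of the first string of $L_i$ (in the enumeration of $U$) that is absent from $\rS$. It selects $o\in\arg\max_i r_i$ and outputs $u_{r_o}$, which is outside $\rS$ by construction.

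By your step one, every $L_i\nsubseteq\supp(\cD)$ has $r_i\le i(\cC,\cD)$ deterministically. Hence if $r_o>i(\cC,\cD)$, then $L_o\subseteq\supp(\cD)$ and $u_{r_o}\in\supp(\cD)\setminus\rS$.

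Now take a good $L_m\subseteq\supp(\cD)$. Let $i_\star$ be the smallest index $\ge i(\cC,\cD)$ with $u_{i_\star}\in L_m$, and let $I_\star=\{k\le i_\star: u_k\in L_m\}$. If every $u_k$ with $k\in I_\star$ appears in $\rS$, then $r_o\ge r_m>i_\star\ge i(\cC,\cD)$. This is a \emph{fixed} finite set of strings, each of mass at least $p_\star>0$ independent of $n$. A union bound therefore gives failure probability at most $|I_\star|(1-p_\star)^n$.

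Comparing depths across languages replaces any knowledge of $i(\cC,\cD)$. The light strings of $L_m$ beyond $i_\star$ never need to be observed, and that is exactly what your window-based consistency test cannot avoid.
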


\begin{algorithm}[t]
\caption{Agnostic Generation}\label{alg:generation}
\begin{algorithmic}[1]
    \Require A sample $\rS = (\rx_1, \dots, \rx_n)$, countable language collection $\cC=\{L_1,L_2,\dots\}$
    \For{$ i=1,2,\ldots $}
        \State Initialize $r_i \gets \min\{k \in \bN:u_k \in L_i\}$ \Comment{Index of the first string in $ L_{i} $}
    \EndFor
    \For{$ i=1,2,\ldots $}
        \While{$\exists j\in[n]: \rx_{j} = u_{r_{i}} $}  \Comment{Check if the current string in $ L_{i} $ occurs in the sample}
            \State Set $ r_{i} \leftarrow \min\{k \in \bN:u_k \in L_i, k > r_i\} $ \Comment{Update $r_i$ to be the index of the next string in $ L_{i}$}
        \EndWhile
\EndFor
\State Let $ o \in \arg\max_{i} r_{i} $
\Comment{Select a language that has the largest $r_i$ value}
\State \Return Any new string from $ L_{o} $ \Comment{Generate the current string in the selected language}
\end{algorithmic}
\end{algorithm}

\begin{proof}
Consider the execution of \Cref{alg:generation}. We first notice that any language $ L_{i}\in \cC$ that satisfies $ L_i\not\subseteq \supp(\cD) $ must have $ r_{i} \leq i(\cC,\cD) $. This is because there exists a string $u_k \in L_{i} $ satisfying $k \le i(\cC,\cD) $, such that $u_k$ is not in the support of $ \cD $ (by definition of $ i(\cC,\cD) $, see \eqref{eqn:i-C-D}). Since the algorithm initializes $ r_{i} $ to be the index of the smallest-indexed string in $ L_{i}$, and increments it only when $u_{r_i}$ shows up in the sample, it must hold that $ r_{i} $ is at most $k$, which is at most $ i(\cC,\cD) $. Thus, if the algorithm selects a language $ L_{o} $ with $ r_{o} > i(\cC,\cD)  $, it must hold that $ L_{o}\subseteq \supp(\cD) $. In this case, $u_{r_o}$ is guaranteed be in $\supp(\cD) \setminus \rS$.

So, let $ L_m$ be any language in $\cC$ satisfying $ L_m\subseteq \supp(\cD) $ (which exists by assumption), and let $i_{\star}$ be the smallest index such that $u_{i_{\star}}\in L_m $ and $ i_{\star}\geq i(\cC,\cD) $. Furthermore, let $ I_{\star}=\{k\leq i_{\star}: u_{k}\in L_m\} $ be the indices of strings in $ L_m $ that are at most $ i_{\star} $, and let $ p_{\star}=\min_{k\in I_{\star}}\p_{\rx\sim \cD}[\rx=u_k]>0 $ be the probability of the least likely string in $ L_m $ that appears in the universe at an index that is at most $ i_{\star} $. We then have that
\begin{align*}
    &\p_{\rS\sim \cD^{n}}[\forall k\in I_{\star} : u_{k}\in \rS ] =
    1-\p_{\rS\sim \cD^{n}}[\exists k\in I_{\star} : u_{k}\not\in \rS ]
    \\
    &\geq
    1-\sum_{k\in I_{\star}}\p_{\rS\sim \cD^{n}}[u_{k}\not\in \rS ] =
    1-\sum_{k\in I_{\star}}(1-\p_{\rx\sim \cD}[\rx=u_{k}])^{n} \\
    &\geq
    1-|I_{\star}|(1-p_{\star})^{n},
\end{align*}
where the first inequality follows by the union bound, and the last inequality follows by  the definition of $p_\star$. Thus, we have that with probability at least $1-|I_{\star}|(1-p_{\star})^{n}$, $r_m > i(\cC, \cD)$, and so, the algorithm will select a language $L_o$ with $r_o > i(\cC, \cD)$. The theorem follows by setting the constants $ c=|I_{\star}| $ and $ C=-\ln(1-p_{\star}) $.
\end{proof}

As noted earlier, for a finite collection of languages $ \cC$, it always holds that $ i(\cC,\cD) $ is finite. Thus as a corollary of \Cref{thm:agnostic-generation-exponential-rate}, we have that for any finite collection of languages $ \cC$, if there exists a language $ L\in \cC$ such that $ L\subseteq \supp(\cD) $, then there exists a generation algorithm that generates a string from the unseen support of $ \cD $ with exponentially high probability.
\begin{corollary}
    \label{corollary:agnostic-generation-finite-collections}
    Let $ \cC$ be a finite collection and $\cD$ be a distribution over the universe $U$. If there exists a language $ L\in \cC$ such that $ L\subseteq \supp(\cD) $, %
    then with probability at least $1-c\cdot\exp(-C\cdot n)$ over $ \rS\sim \cD^{n} $ (where $c, C$ are constants depending only on $U, \cD $ and $ \cC$), the string $\cA(\rS)$ returned by \Cref{alg:generation} satisfies $\cA(\rS) \in \supp(\cD) \setminus \rS$. Namely,
    $
       \generr(\cA, \cD, \cC, n)\leq c\cdot\exp(-C\cdot n).
    $
\end{corollary}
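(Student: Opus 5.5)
The corollary follows from \Cref{thm:agnostic-generation-exponential-rate} once we show that $i(\cC,\cD)<\infty$ for every finite collection $\cC$ and every distribution $\cD$. The existence of a language $L\in\cC$ with $L\subseteq\supp(\cD)$ is assumed directly, so finiteness of $i(\cC,\cD)$ is the only remaining hypothesis of the theorem to check.

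To bound $i(\cC,\cD)$, write $\cC=\{L_1,\dots,L_m\}$ and let $B=\{\ell : L_\ell\nsubseteq\supp(\cD)\}$.

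If $B=\emptyset$, the condition inside the minimum in \eqref{eqn:i-C-D} holds vacuously for $i=1$, so $i(\cC,\cD)=1$.

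Otherwise, for each $\ell\in B$ choose a string $u_{k_\ell}\in L_\ell\setminus\supp(\cD)$, which exists by the definition of $B$. Set $i_0=\max_{\ell\in B}k_\ell$. This is a maximum over finitely many natural numbers, so it is finite. Every language $L\in\cC$ with $L\nsubseteq\supp(\cD)$ then has a witness $u_k$ with $k\le i_0$, $u_k\in L$, and $u_k\notin\supp(\cD)$. Hence $i_0$ belongs to the set in \eqref{eqn:i-C-D}, and $i(\cC,\cD)\le i_0<\infty$.

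With both hypotheses verified, \Cref{thm:agnostic-generation-exponential-rate} applies directly. It gives constants $c,C$ depending only on $U$, $\cD$ and $\cC$ such that, with probability at least $1-c\exp(-Cn)$, the output of \Cref{alg:generation} lies in $\supp(\cD)\setminus\rS$. Consequently $\generr(\cA,\cD,\cC,n)\le c\exp(-Cn)$.

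There is no real obstacle here. The one point that needs care is that the witness indices are chosen per language and that finiteness of $\cC$ is exactly what makes their maximum finite. The standing assumption (footnote) that every language is infinite is what makes \Cref{alg:generation} well defined, and it is inherited unchanged from the theorem.
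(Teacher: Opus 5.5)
Your proposal is correct and follows the paper's route exactly: the paper also just observes that $i(\cC,\cD)<\infty$ whenever $\cC$ is finite and then invokes \Cref{thm:agnostic-generation-exponential-rate}. The paper states that finiteness without argument, and your maximum over finitely many per-language witness indices (together with the vacuous case $B=\emptyset$) is precisely the verification it leaves implicit.
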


\subsection{Tightness of Exponential Rate}
\label{sec:agnostic-generation-exponential-rate-lb}

Our final result shows that a better-than-exponential rate is impossible for agnostic generation under the assumption that there exists $L \in \cC$ satisfying $L \subseteq \supp(\cD)$.

\begin{restatable}[Agnostic Generation Exponential Rate Lower Bound]{theorem}{AgnosticGenerationExponentialRateLowerBound}
    \label{thm:generation-exponential-rate-lower-bound}
    Let $\cC$ be any collection over a universe $U$ such that there exist languages $ L,L'\in \cC$ with $ |L\cap L'|<\infty $ and $ U \setminus (L\cup L')\not=\emptyset $. For any generation algorithm $ \cA $ using randomness $\rr$, there exists a distribution $ \cD $ over $U$ such that $ \exists L\in \cC$ with $ L\subseteq \supp(\cD) $, and furthermore
    \begin{align*}
        \generr(\cA, \cD, \cC, n) \geq \exp(-2n)/4
    \end{align*}
    for infinitely many $ n $.
\end{restatable}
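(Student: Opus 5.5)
The plan is a two-point (Le Cam style) argument, as in the proof of \Cref{thm:agnotic-identification-exponential-rate-optimal}. I will build two distributions $\cD_0,\cD_1$, with $L\subseteq\supp(\cD_0)$ and $L'\subseteq\supp(\cD_1)$, whose supports overlap only in a finite set. I will then exhibit a single sample $S$ of size $n$ with two properties: it has probability $\gtrsim 2^{-n}$ under both distributions, and it already contains every string in the overlap. Conditioned on $S$, any output of $\cA$ fails under at least one of $\cD_0,\cD_1$. Averaging over $b\in\{0,1\}$ then gives the bound for infinitely many $n$ for one fixed $b$.

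\textbf{Construction.} Fix $s\in U\setminus(L\cup L')$ and let $F=L\cap L'$, which is finite by assumption. Define $\cD_0$ as follows.
\begin{itemize}
\item It puts mass $1/2$ on $s$.
\item It spreads the remaining mass $1/2$ over $L$ so that every string of $L$ gets positive mass. For instance, enumerate $L$ and assign geometrically decaying weights.
\item It puts no mass on strings outside $L\cup\{s\}$.
\end{itemize}
Define $\cD_1$ symmetrically with $L'$ in place of $L$. Then $\supp(\cD_0)=L\cup\{s\}\supseteq L$ and $\supp(\cD_1)=L'\cup\{s\}\supseteq L'$, so the hypothesis of the theorem holds for both. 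Moreover $\supp(\cD_0)\cap\supp(\cD_1)=F\cup\{s\}$. Let $q>0$ be the minimum mass that either distribution assigns to an element of $F$; it is positive because $F$ is finite. If $F=\emptyset$, this step is vacuous.

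\textbf{Common sample.} For $n\ge |F|+1$, let $S$ consist of one copy of each $f\in F$ followed by $n-|F|$ copies of $s$, in a fixed order. Under each $\cD_b$,
\[
\p_{\rS\sim\cD_b^n}[\rS=S]\ \ge\ q^{|F|}\,2^{-(n-|F|)}.
\]
Now fix any realization of $\rr$ and let $y=\cA(S,\rr)$. There are two cases.
\begin{itemize}
\item If $y\in F\cup\{s\}$, then $y\in S$, so $y\notin\supp(\cD_b)\setminus S$ for both $b$.
\item Otherwise $y\notin\supp(\cD_0)\cap\supp(\cD_1)$, so $y$ lies outside the support of at least one $\cD_b$.
\end{itemize}
Either way, $\ind\{y\notin\supp(\cD_0)\setminus S\}+\ind\{y\notin\supp(\cD_1)\setminus S\}\ge 1$.

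\textbf{Averaging.} By the law of total expectation and independence of $\rS$ and $\rr$,
\[
\max_b \generr(\cA,\cD_b,\cC,n)\ \ge\ \tfrac12\sum_b \generr(\cA,\cD_b,\cC,n)\ \ge\ \tfrac12\, q^{|F|}2^{|F|}\,2^{-n}.
\]
Since $q^{|F|}2^{|F|}$ is a constant, this exceeds $\exp(-2n)/4$ for all sufficiently large $n$. Hence one fixed $b$ attains the bound for infinitely many $n$.

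\textbf{Main obstacle.} The delicate point is the finite overlap $F$. An output in $L\cap L'$ could be valid under both distributions, which would defeat the two-point argument. Forcing all of $F$ into the common sample neutralises this, at the cost of only a constant factor $q^{|F|}$. The rest is routine bookkeeping, including the fact that $L$ may be finite (the weights over $L$ still sum to $1/2$).
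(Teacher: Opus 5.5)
Your proof is correct, but it handles the overlap $F=L\cap L'$ differently from the paper.

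\textbf{The paper's route.} The paper splits into cases. When $L\cap L'=\emptyset$ it does exactly what you do with $F=\emptyset$: mass $1/2$ on $s$, and the common sample $s^n$. When $m=|L\cap L'|>0$ it discards $s$ entirely. There $\cD_0$ puts mass $(1-\eps)/m$ on each string of $L\cap L'$ and the remaining $\eps=1/(16m)$ on $L\setminus L'$, and $\cD_1$ is symmetric. Every sample whose set of distinct strings is exactly $L\cap L'$ then has the same probability under both distributions, and leaves the disjoint remainders $L\setminus L'$ and $L'\setminus L$. A union-bound computation shows this event has probability at least $e^{-n/(8m)}/2$ once $n\ge 8m^2$.

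\textbf{Your route.} You keep $s$ as padding. This lets you use a single explicit ordered sample of probability at least $(2q)^{|F|}2^{-n}$ under both distributions. You only need lower bounds on these probabilities rather than equal probabilities. The observation that any output in $F\cup\{s\}$ already lies in $S$ neutralises the overlap.

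\textbf{What each buys.} Your route is shorter and uniform in $m$. It also works when $L$ or $L'$ is a nonempty finite language (e.g.\ $L\subseteq L'$). The paper's $m>0$ case instead needs $L\setminus L'$ and $L'\setminus L$ to be nonempty, which it gets from its standing assumption that languages are infinite. In exchange, the paper's $m>0$ case never uses the hypothesis $U\setminus(L\cup L')\neq\emptyset$. Its hard distributions there have $\supp(\cD_0)=L$ and $\supp(\cD_1)=L'$ exactly, so the bound holds even for realizable distributions in that case. Its constants and threshold on $n$ are also explicit, whereas yours depend on the chosen weights through $q$.
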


To get some intuition for the lower bound, consider the special case where the collection has two languages $L, L'$ satisfying $L \cap L'=\emptyset$, and $U \setminus (L \cup L') \neq \emptyset$. In this case, let $s$ be any string that belongs to neither of $L$ or $L'$. Consider two distributions $\cD_0$ and $\cD_1$ that each put mass $1/2$ on $s$, and spread the rest of their mass on $L$ and $L'$ respectively. Under both $\cD_0$ and $\cD_1$, with probability $2^{-n}$, the input comprises solely of $n$ copies of $s$. Conditioned on this input, the output of the algorithm must necessarily not belong to one of $L$ or $L'$, since these are disjoint languages. But if the output does not belong to $L$, then the algorithm errs if the true distribution was $\cD_0$, and vice versa. Randomizing over $\cD_0$ and $\cD_1$ yields the lower bound. We now give the details of the entire argument.

\begin{proof}
    Let $ m=|L\cap L'| $. We first consider the case that $ m=0 $. In this case, let $s$ be a string which is in neither of $ L $ and $ L' $; such an $s$ exists by assumption that $U \setminus (L\cup L')\not=\emptyset$. Now, let $ \cD_{0} $ be a distribution that has mass $1/2 $ on $ s $, and spreads the rest of its mass arbitrarily on the strings in $L$, ensuring positive mass on every string. Similarly, let $ \cD_{1} $ be a distribution that has mass $1/2 $ on $ s $, and spreads the rest of its mass arbitrarily on the strings in $L'$, again ensuring that every string gets positive mass. Note that $L \subseteq \supp(\cD_0)$ and $L' \subseteq \supp(\cD_1)$. Now consider realizing the sample $\rS=s^n$, which gets realized with the same probability in both $\cD_0$ and $\cD_1$; in this case, observe that $\supp(\cD_0) \setminus s^n = L$ and $\supp(\cD_1) \setminus s^n = L'$. Then, for any realization $\rr$ of $\cA$'s randomness, by the disjointedness of $L$ and $L'$, it must hold that either $\cA(s^n ,\rr) \notin \supp(\cD_0) \setminus s^n$ or $\cA(s^n ,\rr) \notin \supp(\cD_1) \setminus s^n$. Namely, we have that $\ind\{\cA(s^n,\rr) \not\in \supp(\cD_{0})\setminus s^n\}+\ind\{\cA(s^n,\rr) \not\in \supp(\cD_{1}) \setminus s^n\} \geq 1 $. Therefore, for any $n$,
    \begin{align*}
        &\max_{b\in\{0,1\}} \p_{\rS\sim \cD_{b}^{n},\rr}[\cA(\rS, \rr) \not\in \supp(\cD_{b}) \setminus \rS] \ge \frac{1}{2}\left( \p_{\rS\sim \cD_{0}^{n},\rr}[\cA(\rS, \rr) \not\in \supp(\cD_{0}) \setminus \rS]+\p_{\rS\sim \cD_{1}^{n},\rr}[\cA(\rS, \rr) \not\in \supp(\cD_{1}) \setminus \rS]\right) \\
        =& \frac12 \left(\e_{\rS \sim \cD_0^n}\e_{\rr}\left[\ind\{\cA(\rS,\rr) \not\in \supp(\cD_{0})\setminus \rS\}\right]+\e_{\rS \sim \cD_1^n}\e_{\rr}\left[\ind\{\cA(\rS,\rr) \not\in \supp(\cD_{1}) \setminus \rS\}\right]\right) \\
        \ge& \frac12 \left(\p_{\rS \sim \cD_0^n}[\rS = s^n] \cdot \e_{\rr}\left[\ind\{\cA(s^n,\rr) \not\in \supp(\cD_{0})\setminus s^n\}\right]+\p_{\rS \sim \cD_1^n}[\rS = s^n]\cdot \e_{\rr}\left[\ind\{\cA(s^n,\rr) \not\in \supp(\cD_{1}) \setminus s^n\}\right]\right) \\
        =& \frac{1}{2^{n+1}} \left(\e_{\rr}\left[\ind\{\cA(s^n,\rr) \not\in \supp(\cD_{0})\setminus s^n\}+\ind\{\cA(s^n,\rr) \not\in \supp(\cD_{1}) \setminus s^n\}\right]\right) \ge \frac{1}{2^{n+1}} \ge \exp(-2n).
    \end{align*}
    The above implies that $\exists b\in\{0,1\}$ such that $\p_{\rS\sim \cD_{b}^{n},\rr}[\cA(\rS, \rr) \not\in \supp(\cD_{b}) \setminus \rS] \ge \exp(-2n)$ for infinitely many $n$.

    Next, consider the case that $ m > 0 $. Fix $\eps = \frac{1}{16m}$. Let $\cD_0$ be a distribution that has mass $(1-\eps)/m$ on every string in $L \cap L'$, and spreads the rest of the $\eps$ mass arbitrarily on $L \setminus L'$, ensuring that every string gets positive mass. Similarly, let $\cD_1$ be a distribution that has mass $(1-\eps)/m$ on every string in $L \cap L'$, and spreads the rest of the $\eps$ mass arbitrarily on $L' \setminus L$, again ensuring that every string gets positive mass. Note that $\supp(\cD_0)=L$ and $\supp(\cD_1)=L'$. Let $S$ be any realization of the sample $\rS$ such that the set of distinct strings in $S$ is exactly $L \cap L'$, i.e., $\{s: s\in S\}=L \cap L'$. We observe that the probability of realizing such an $S$ is the same under both $\cD_0$ and $\cD_1$ by definition of the distributions. Furthermore, we have that $\supp(\cD_0) \setminus S = L \setminus L'$ and $\supp(\cD_1) \setminus S = L' \setminus L$. This means that for any realization of the randomness $\rr$, it must hold that either $\cA(S, \rr) \notin \supp(\cD_0) \setminus S$ or $\cA(S, \rr) \notin \supp(\cD_1) \setminus S$, implying that $\ind\{\cA(S, \rr) \notin \supp(\cD_0) \setminus S\} + \ind\{\cA(S, \rr) \notin \supp(\cD_1) \setminus S\} \ge 1$. Therefore, for any $n$,
    \begingroup
    \allowdisplaybreaks
    \begin{align*}
        &\max_{b\in\{0,1\}} \p_{\rS\sim \cD_{b}^{n},\rr}[\cA(\rS, \rr) \not\in \supp(\cD_{b}) \setminus \rS] \ge \frac{1}{2}\left( \p_{\rS\sim \cD_{0}^{n},\rr}[\cA(\rS, \rr) \not\in \supp(\cD_{0}) \setminus \rS]+\p_{\rS\sim \cD_{1}^{n},\rr}[\cA(\rS, \rr) \not\in \supp(\cD_{1}) \setminus \rS]\right) \\
        =& \frac12 \left(\e_{\rS \sim \cD_0^n}\e_{\rr}\left[\ind\{\cA(\rS,\rr) \not\in \supp(\cD_{0})\setminus \rS\}\right]+\e_{\rS \sim \cD_1^n}\e_{\rr}\left[\ind\{\cA(\rS,\rr) \not\in \supp(\cD_{1}) \setminus \rS\}\right]\right) \\
        \ge& \frac12 \left(\sum_{\stackrel{S\in (L\cap L')^{n}}{\{s:s\in S\}=L\cap L'}} \left(\p_{\rS \sim \cD_0^n}[\rS=S] \cdot \e_{\rr}\left[\ind\{\cA(S,\rr) \not\in \supp(\cD_{0})\setminus S\}\right]+\p_{\rS \sim \cD_1^n}[\rS = S]\cdot \e_{\rr}\left[\ind\{\cA(S,\rr) \not\in \supp(\cD_{1}) \setminus S\}\right]\right)\right) \\
        =& \frac12 \left(\sum_{\stackrel{S\in (L\cap L')^{n}}{\{s:s\in S\}=L\cap L'}} \p_{\rS \sim \cD_0^n}[\rS=S] \cdot \left(\e_{\rr}\left[\ind\{\cA(S,\rr) \not\in \supp(\cD_{0})\setminus S\}\right]+ \e_{\rr}\left[\ind\{\cA(S,\rr) \not\in \supp(\cD_{1}) \setminus S\}\right]\right)\right) \\
        \ge& \frac12 \sum_{\stackrel{S\in (L\cap L')^{n}}{\{s:s\in S\}=L\cap L'}} \p_{\rS \sim \cD_0^n}[\rS=S] = \frac{1}{2} \cdot \p_{\rS \sim \cD_0^n}\left[\{s\in \rS\}=L \cap L'\right].
    \end{align*}
    \endgroup
    We will now show that $\p_{\rS \sim \cD_0^n}\left[\{s\in \rS\}=L \cap L'\right]$ is at least $\exp(-2n)/2$ for all large enough $n$; this will imply that $\exists b \in \{0,1\}$ such that $\p_{\rS\sim \cD_{b}^{n},\rr}[\cA(\rS, \rr) \not\in \supp(\cD_{b}) \setminus \rS] \ge \exp(-2n)/4$ for infinitely many $n$, completing the proof.

    So now, notice that
    \begin{align*}
        \p_{\rS \sim \cD_0^n}[\exists s \in L \cap L': s \notin \rS] &\le \sum_{s \in L \cap L'} \p_{\rS \sim \cD_0^n}[s \notin \rS] = m\left(1-\frac{1-\eps}{m}\right)^n 
        \le m\exp(-n(1-\eps)/m) \\
        &\le m\exp(-n/2m) = m\exp(-n/4m)\exp(-n/4m) \le \frac{4m^2}{n}\cdot\exp(-n/4m),
    \end{align*}
    where in the third-to-last inequality, we used that $1+x \le e^x$ for all real $x$, in the second-to-last inequality, we used that $\eps \le 1/2$, and in the last inequality, we used that $e^{-x} \le 1/x$ for all $x > 0$.

    Similarly, we have that
    \begin{align*}
        \p_{\rS \sim \cD_0^n}[\exists s \in \rS: s \notin L \cap L'] = 1-\p_{\rS \sim \cD_0^n}[\forall s \in \rS: s \in L \cap L'] = 1-(1-\eps)^n \le 1-\exp(-2n\eps) = 1-\exp(-n/8m),
    \end{align*}
    where in the inequality above, we used that $1-x \ge \exp(-2x)$ for $x \in [0,1/2]$.

    Combining the above, we obtain that
    \begin{align*}
        \p_{\rS \sim \cD_0^n}[\{s\in \rS\}=L \cap L'] &= 1-\p_{\rS \sim \cD_0^n}\left[\exists s \in L \cap L': s \notin \rS \text{ or } \exists s \in \rS: s \notin L \cap L'\right] \\
        &\ge 1- \p_{\rS \sim \cD_0^n}[\exists s \in L \cap L': s \notin \rS] - \p_{\rS \sim \cD_0^n}[\exists s \in \rS: s \notin L \cap L'] \tag{union bound} \\
        &\ge \exp(-n/8m) - \frac{4m^2}{n}\cdot\exp(-n/4m) = \exp(-n/8m)\left(1- \frac{4m^2}{n}\cdot\exp(-n/8m)\right).
    \end{align*}
    For $n \ge 8m^2$, the last quantity above is at least $\exp(-n/8m)/2$, which, since $m \ge 1$, is at least $\exp(-2n)/2$.
\end{proof}

\section{Conclusion}
\label{sec:conclusion}

In this paper, we initiated a study of agnostic language identification and generation, where the data distribution may be arbitrary, and not necessarily supported on any language from the reference collection. We proposed natural objectives to study for both the problems. For language identification, we derived a precise characterization for when exponential excess error rates may be obtained, and for language generation, we proved a general lower bound, and also identified a sufficient condition for finite collections that allows for exponential error rates. A natural open question here is to determine whether this sufficient condition allows an exponential rate for agnostic generation more generally for all countable collections.

More broadly, the setting of agnostic generation is rather nuanced, and as alluded to earlier, several natural objectives run into pathologies. While the objective we studied seems natural, our general lower bound suggests that it may be too strong. As future work, it would be interesting to study if there are other meaningful objectives that capture the difficulty of agnostic language generation more faithfully.

\section*{Acknowledgements}
MMH was supported by an Internationalisation Fellowship from the
Carlsberg Foundation.
MMH was also supported by the European Union (ERC, TUCLA, 101125203).
Views and opinions expressed are however those of the author(s) only and do not necessarily reflect those of the European Union or the European Research Council.
Neither the European Union nor the granting authority can be held responsible for them.
Lastly MMH was also supported by Independent Research Fund Denmark (DFF) Sapere Aude Research Leader grant No.\ 9064-00068B.

CP was supported by Moses Charikar's and Gregory Valiant's Simons Investigator Awards, and a Google PhD Fellowship.

\bibliographystyle{plainnat}
\bibliography{references}

\appendix
\section{Other Objectives for Agnostic Generation}
\label{sec:appendix-agnostic-generation-other-benchmarks}

A different model to consider for generation is one where the collection $\cC$ comprises of \textit{distributions}, instead of languages. That is, consider a reference collection $\cC=\{D_1,D_2,\dots\}$, where each $D_i$ is a distribution over the universe $U$. In this case, a natural objective to consider would be the following: given $n$ samples drawn i.i.d. from the unknown distribution $\cD$, an algorithm $\cA$, using randomness $\rr$, outputs a distribution $\cA(\rS, \rr)$ over strings so as to minimize:
\begin{align}
    \label{eqn:alternative-benchmark-1}
    \e_{\rS\sim \cD^{n}, \rr}\left[\p_{\rx\sim \cA(\rS, \rr)\backslash \rS}[\rx \not\in \supp(\cD)] - \inf_{i \in \bN}\p_{\rx\sim D_{i}\backslash \rS}[\rx \not\in \supp(\cD)]\right].
\end{align}
Here, $ D\backslash \rS $ denotes the distribution $ D $ conditioned on not producing any string in $ \rS $. Namely, the algorithm tries to output a generating distribution, which, after renormalizing outside the input sample $\rS$, is competitive with the best distribution from the collection. However, we can clearly see how $\inf_{i \in \bN}\p_{\rx\sim D_{i}\backslash \rS}[\rx \not\in \supp(\cD)]$ depends heavily on the realized input sample $\rS$. That is, the benchmark distribution changes drastically with the input data, which makes it difficult to reason about what exactly an algorithm should output. Indeed, this departs significantly from standard agnostic learning settings, where the benchmark to compete with generally only depends on the distribution of the input, and not on the realized sample itself.

One may then consider the following objective which is easier than \eqref{eqn:alternative-benchmark-1}:
\begin{align}
    \label{eqn:alternative-benchmark-2}
    \e_{\rS\sim \cD^{n}, \rr}\left[\p_{\rx\sim \cA(\rS, \rr)\backslash \rS}[\rx \not\in \supp(\cD)]\right] - \inf_{i \in \bN}\e_{\rS \sim \cD^n}\left[\p_{\rx\sim D_{i}\backslash \rS}[\rx \not\in \supp(\cD)]\right].
\end{align}
While this decouples the benchmark distribution from the realized sample, the benchmark distribution still depends on the size $n$ of the sample.\footnote{We note that for both the objectives \eqref{eqn:alternative-benchmark-1} and \eqref{eqn:alternative-benchmark-2}, if the collection $\cC$ contains a $\cD_i$ for which $\supp(\cD_i) \subseteq \supp(\cD)$, then translating the collection of distributions into a collection of languages as $L_i = \supp(\cD_i)$, the setting reduces to that considered in \Cref{thm:agnostic-generation-exponential-rate}.}

\paragraph{Example: Best Distribution Changes with $ n $.}

Consider the universe to be $ U=\mathbb{Z}$, and the collection of distributions $\cC=\{ D_{1},D_{2},\ldots \}$, where $ D_{i} $ puts mass $1/i$ on the integer $ i $, and spreads the rest of its mass on $ \{\ldots,-2,-1,0\} $. Let the target distribution $ \cD $ have mass $ 1/2^{i} $ on each $ i=1,2,\ldots $. Note that support of $\cD_i$ and $\cD$ only have the integer $i$ in common. %
Then, for any sample size $ n $, we have that $ \e_{\rS\sim \cD^{n}}[\p_{\rx\sim D_{i}\backslash \rS}[\rx \in \supp(\cD)]]=(1-1/2^{i})^{n}/i $. The benchmark distribution supremizes this quantity; it can be verified using elementary but tedious calculus that this quantity is maximized at $i \in  [\log_{2}(n\ln{(2)}),\log_{2}((n\ln{(2)})^{2}+1)]$. Thus, the index of the best distribution in $ \cC $ grows with $ n $ approximately as $ \log_{2}(n) $.

\end{document}